\title{Formatting Instructions For NeurIPS 2024}
\author{%
  David S.~Hippocampus\thanks{Use footnote for providing further information
    about author (webpage, alternative address)---\emph{not} for acknowledging
    funding agencies.} \\
  Department of Computer Science\\
  Cranberry-Lemon University\\
  Pittsburgh, PA 15213 \\
  \texttt{hippo@cs.cranberry-lemon.edu} \\
  % examples of more authors
  % \And
  % Coauthor \\
  % Affiliation \\
  % Address \\
  % \texttt{email} \\
  % \AND
  % Coauthor \\
  % Affiliation \\
  % Address \\
  % \texttt{email} \\
  % \And
  % Coauthor \\
  % Affiliation \\
  % Address \\
  % \texttt{email} \\
  % \And
  % Coauthor \\
  % Affiliation \\
  % Address \\
  % \texttt{email} \\
}
\theoremstyle{plain}
\newtheorem{theorem}{Theorem}[section]
\newtheorem{lemma}[theorem]{Lemma}
\newtheorem{corollary}[theorem]{Corollary}
\theoremstyle{definition}
\theoremstyle{remark}
\newcommand{\hide}[1]{}
\def\R{\mathbb{R}}
\def\N{\mathbb{N}}
\def\M{\mathcal{M}}
\def\TV{\mathrm{TV}}
\def\diag{\mathrm{diag}}
\def\learnH{\tilde H}
\def\estH{\hat H}
\def\trueH{H}
\def\ones{\mathbf 1}
\def\uvec{e}
\def\zeros{\mathbf 0}
\def\hit{\ensuremath{\mathsf{hit}}\xspace}
\def\miss{\ensuremath{\mathsf{miss}}\xspace}
\def\ULTRAMC{\ensuremath{\mathsf{ULTRA\!-\!MC}}\xspace}
\renewcommand{\dagger}[0]{+}
\def\hit{\ensuremath{\mathsf{score}}\xspace}
\def\miss{\ensuremath{\mathsf{miss}}\xspace}
\newcommand{\spara}[1]{\paragraph{#1}}
\newcommand{\fabian}[1]{\textcolor{red}{[Fabian: #1]}}
\DeclareMathOperator{\vol}{vol}
\title{ULTRA-MC: A Unified Approach to Learning Mixtures of Markov Chains via Hitting Times}
\date{February 2024}
\author{Fabian Spaeh\\ fspaeh@bu.edu \\ Boston University \and Konstantinos Sotiropoulos \\ ksotirop@bu.edu \\ Boston University \and Charalampos E. Tsourakakis\\ ctsourak@bu.edu \\ Boston University}
\date{May 2023}
\begin{document}

\maketitle

\begin{abstract}
This study introduces a novel approach for learning mixtures of Markov chains, a critical process applicable to various fields, including healthcare and the analysis of web users. Existing research has identified a clear divide in methodologies for learning mixtures of discrete~\cite{gupta2016mixtures,spaeh2023casvd} and continuous-time Markov chains~\cite{spaeh2024www}, while the latter presents additional complexities for recovery accuracy and efficiency. 
 
We introduce a unifying strategy for learning mixtures of discrete and continuous-time Markov chains, focusing on hitting times, which are well defined for both types.
Specifically, we design a reconstruction algorithm that outputs a mixture which accurately reflects the estimated hitting times and demonstrates resilience to noise. We introduce an efficient gradient-descent approach, specifically tailored to manage the computational complexity and non-symmetric characteristics inherent in the calculation of hitting time derivatives.
Our approach is also of significant interest when applied to a single Markov chain, thus extending the methodologies previously established by \citet{hoskins2018learning} and \citet{wittmann2009reconstruction}. We complement our theoretical work with experiments conducted on synthetic and real-world datasets, providing a comprehensive evaluation of our methodology. 

% {\bf more details after we write the paper}

\end{abstract}

\section{Introduction}
\label{sec:intro} 

Markov chains~\cite{levin2017markov} serve as a fundamental and remarkably versatile tool in modeling, underpinning a wide array of applications from Pagerank in Web search~\cite{page1999pagerank} to language modeling~\cite{lafferty2001document,mumford2010pattern}. 
A \textit{Markov chain} is a statistical model that describes a sequence of possible events in which the probability of new events depend only on the presence, but not the past. Formally, this characteristic is called the \textit{Markov property} and says that future states of the process depends only on the current state, not on the sequence of states that preceded it.
% This property is often described as \textit{``memorylessness''}.
Markov chains are classified into \textit{Discrete-Time Markov Chains (DTMCs)}
% (or simply discrete)
and \textit{Continuous-Time Markov Chains (CTMCs)}.
The former involve a system making transitions between states at discrete time steps, with the probability of moving to the next state dependent on the current state. For the latter transitions occur continuously over time, and the system has a certain rate of transitioning from one state to another. 

Markov chains are used in various fields such as economics, game theory, genetics, and finance to model random systems that evolve over time depending only on their current state. While the assumption of Markovian dynamics simplifies many real-world situations \cite{chierichetti2012web},
they are a well-established tool that offers a robust mathematical
foundation with strong practical results. 

\begin{figure}
    \centering

\tikzstyle{court}=[black!20, line width=2pt,samples=100]
\tikzstyle{player}=[circle, inner sep=0pt, text width=16pt, align=center, draw=black!80, line width=1pt, fill=white]
\tikzstyle{pass}=[-latex, black]
\tikzstyle{basket}=[player, rectangle, inner sep=2pt, minimum height=13pt, text centered, text width=21pt]
\tikzstyle{start}=[player,fill=none,draw=blue,line width=2.5pt]

\scalebox{0.77}{
\begin{tikzpicture}[scale=1.6]

    \draw[court, domain=0:180] plot ({min(1.7, max(-1.7, 2*cos(\x)))}, {2*sin(\x)});
    \draw[court] (-0.5,0) -- (-0.5,1.2) -- (0.5,1.2) -- (0.5,0);
    \draw[court, domain=0:180] plot ({0.3*cos(\x)}, {1.2 + 0.3*sin(\x)});
    \draw[court, dashed, domain=0:180] plot ({0.3*cos(\x)}, {1.2 - 0.3*sin(\x)});
    \draw[court] (-1.8,0) -- (1.8,0);

    \node[player] (PG) at (0,2) {PG};
    \node[player] (SG) at (-1.3,1.5) {SG};
    \node[player] (PF) at (-1.0,0.6) {PF};
    \node[player] (C) at (0.9,1.0) {C};
    \node[player] (SF) at (1.1,0.3) {SF};

    \node[basket] (miss) at (-0.4,0) {$\mathsf{miss}$};
    \node[basket] (score) at (0.4,0) {$\mathsf{score}$};

    \node[label={[label distance=6pt]below:{1.7s}}] at (PF) {};
    \draw[pass,opacity=0.20847605885349677,line width=2.0847605885349676pt,green] (PF) to (score);
    \draw[pass,opacity=0.2001092564635487,line width=2.001092564635487pt] (PF) to (C);
    \node[label={[label distance=6pt]above:{5.0s}}] at (PG) {};
    \draw[pass,opacity=0.2845671598026733,line width=2.845671598026733pt,red] (PG) to (miss);
    \draw[pass,opacity=0.2743386814650957,line width=2.743386814650957pt] (PG) to (C);
    \node[label={[label distance=6pt]above:{3.4s}}] at (SG) {};
    \draw[pass,opacity=0.15267372181655098,line width=1.52673721816551pt,green] (SG) to (score);
    \draw[pass,opacity=0.2463312684467804,line width=2.463312684467804pt] (SG) to (C);
    \draw[pass,opacity=0.28734556401841704,line width=2.8734556401841704pt] (SG) to (SF);
    \node[label={[label distance=6pt]above:{1.5s}}] at (C) {};
    \node[start] at (C) {};
    \draw[pass,opacity=0.3468759216357968,line width=3.468759216357968pt] (C) to (PG);
    \draw[pass,opacity=0.32393812144070355,line width=3.2393812144070355pt] (C) to (SG);
    \node[label={[label distance=6pt]below:{4.0s}}] at (SF) {};
    \draw[pass,opacity=0.4009730291786204,line width=4.009730291786203pt,green] (SF) to (score);

\end{tikzpicture}}
\scalebox{0.77}{
\begin{tikzpicture}[scale=1.6]

    \draw[court, domain=0:180] plot ({min(1.7, max(-1.7, 2*cos(\x)))}, {2*sin(\x)});
    \draw[court] (-0.5,0) -- (-0.5,1.2) -- (0.5,1.2) -- (0.5,0);
    \draw[court, domain=0:180] plot ({0.3*cos(\x)}, {1.2 + 0.3*sin(\x)});
    \draw[court, dashed, domain=0:180] plot ({0.3*cos(\x)}, {1.2 - 0.3*sin(\x)});
    \draw[court] (-1.8,0) -- (1.8,0);

    \node[player] (PG) at (0,2) {PG};
    \node[player] (SG) at (-1.3,1.5) {SG};
    \node[player] (PF) at (-1.0,0.6) {PF};
    \node[player] (C) at (0.9,1.0) {C};
    \node[player] (SF) at (1.1,0.3) {SF};

    \node[basket] (miss) at (-0.4,0) {$\mathsf{miss}$};
    \node[basket] (score) at (0.4,0) {$\mathsf{score}$};

    \node[label={[label distance=6pt]above:{2.9s}}] at (C) {};
    \draw[pass,opacity=0.3450090989993528,line width=3.450090989993528pt,green] (C) to (score);
    \node[label={[label distance=6pt]above:{3.5s}}] at (SG) {};
    \draw[pass,opacity=0.1990736358941285,line width=1.990736358941285pt,green] (SG) to (score);
    \draw[pass,opacity=0.37499361565514855,line width=3.7499361565514855pt] (SG) to (PF);
    \node[label={[label distance=6pt]above:{4.1s}}] at (PG) {};
    \draw[pass,opacity=0.21957744173388472,line width=2.195774417338847pt,red] (PG) to (miss);
    \draw[pass,opacity=0.27189677177411714,line width=2.7189677177411715pt] (PG) to (SG);
    \draw[pass,opacity=0.23728026757180853,line width=2.372802675718085pt] (PG) to (PF);
    \node[label={[label distance=6pt]below:{2.4s}}] at (PF) {};
    \node[start] at (PF) {};
    \draw[pass,opacity=0.23577602764990205,line width=2.3577602764990204pt,green] (PF) to (score);
    \draw[pass,opacity=0.20992644658598875,line width=2.0992644658598874pt] (PF) to (C);
    \node[label={[label distance=6pt]below:{1.0s}}] at (SF) {};
    \draw[pass,opacity=0.2803720707066887,line width=2.803720707066887pt,green] (SF) to (score);

\end{tikzpicture}}
\scalebox{0.77}{
\begin{tikzpicture}[scale=1.6]

    \draw[court, domain=0:180] plot ({min(1.7, max(-1.7, 2*cos(\x)))}, {2*sin(\x)});
    \draw[court] (-0.5,0) -- (-0.5,1.2) -- (0.5,1.2) -- (0.5,0);
    \draw[court, domain=0:180] plot ({0.3*cos(\x)}, {1.2 + 0.3*sin(\x)});
    \draw[court, dashed, domain=0:180] plot ({0.3*cos(\x)}, {1.2 - 0.3*sin(\x)});
    \draw[court] (-1.8,0) -- (1.8,0);

    \node[player] (PG) at (0,2) {PG};
    \node[player] (SG) at (-1.3,1.5) {SG};
    \node[player] (PF) at (-1.0,0.6) {PF};
    \node[player] (C) at (0.9,1.0) {C};
    \node[player] (SF) at (1.1,0.3) {SF};

    \node[basket] (miss) at (-0.4,0) {$\mathsf{miss}$};
    \node[basket] (score) at (0.4,0) {$\mathsf{score}$};

    \node[label={[label distance=6pt]above:{2.7s}}] at (C) {};
    \node[start] at (C) {};
    \draw[pass,opacity=0.2861113525283822,line width=2.8611135252838222pt] (C) to (SF);
    \node[label={[label distance=6pt]above:{3.1s}}] at (SG) {};
    \draw[pass,opacity=0.17354568382518995,line width=1.7354568382518996pt,red] (SG) to (miss);
    \draw[pass,opacity=0.1798300380727732,line width=1.798300380727732pt,green] (SG) to (score);
    \draw[pass,opacity=0.2454367258864194,line width=2.454367258864194pt] (SG) to (PG);
    \node[label={[label distance=6pt]above:{4.7s}}] at (PG) {};
    \draw[pass,opacity=0.301043436315341,line width=3.0104343631534096pt,red] (PG) to (miss);
    \node[label={[label distance=6pt]below:{1.9s}}] at (PF) {};
    \draw[pass,opacity=0.24080843737214616,line width=2.4080843737214614pt] (PF) to (SG);
    \node[label={[label distance=6pt]below:{1.2s}}] at (SF) {};
    \draw[pass,opacity=0.5022308425772863,line width=5.022308425772863pt,red] (SF) to (miss);

\end{tikzpicture}}

    \caption{Three offensive strategies of the Denver Nuggets during the 2022 season, learned from a mixture of $C=6$ continuous-time Markov chains.
    We provide the remaining strategies in Figure~\ref{fig:nuggets}
    and a detailed explanation in Section~\ref{sec:exp}.}
    \label{fig:nba}
\end{figure}
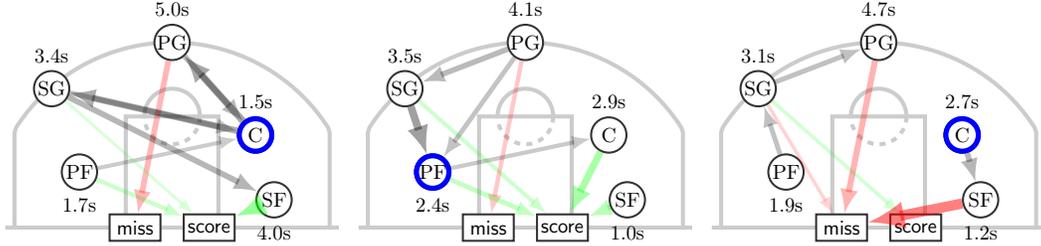

To extend the applicability of Markov chains beyond situations
that are approximately Markovian, researchers and practitioners employ
mixtures of Markov chains.
Figure~\ref{fig:nba} shows an example, where we model multiple strategies
of a basketball team,
assuming that each individual strategy plays out as approximately Markovian.
Another application lies in discrete choice where we study user
preferences for choosing between multiple items, for instance in online shops.
Single Markov chains can be used to learn a restricted model of discrete
choice~\cite{blanchet2016markov} while mixtures are known to
enhance their modeling power~\cite{chierichetti2018learning}.
There are many more applications, some
in ecology or cancer research~\cite{patterson2009classifying,fearon1990genetic}.

Despite the extensive research on unmixing various distributions
\cite{dasgupta1999learning,gordon2021identifying},
% as evidenced by works on Gaussian mixtures~\cite{dasgupta1999learning} and mixtures of DAGs~\cite{gordon2021identifying},
the specific challenge of deciphering mixtures of Markov chains has not been as extensively explored. \citet{gupta2016mixtures} introduce a seminal reconstruction algorithm that is provably efficient and based on the Singular Value Decomposition (SVD). Spaeh and Tsourakakis~\cite{spaeh2023casvd} extend this work by relaxing some of the necessary conditions of their work.
% This leads to the development of an efficient reconstruction algorithm applicable to a broader spectrum of mixtures.
Until recently~\cite{spaeh2024www}, the task of learning mixtures of continuous time Markov chains (CTMCs) had not garnered significant attention. Although recent advancements have offered practical tools for learning different mixtures of CTMCs, complete with theoretical assurances and polynomial-time algorithms, these methods significantly differ from one another. In our study, we present a methodology that is independent of whether the data originates from a discrete or a continuous time Markov chain.
Instead, it is based on \emph{hitting times}, a statistic that is common to both settings. This allows us to make the following contributions:

\begin{enumerate}
\item We show how to learn a single Markov chain using (a noisy subset of) hitting times via projected gradient descent. % , a method that is applicable and effective for both discrete and continuous time Markov chains.
% We introduce an algorithm distinguished by its efficiency, integrating intricate gradient computations.
% We introduce an efficient algorithm that uses intricate gradient calculations.
The main contribution lies in formulating an efficient expression for the
computation of gradients. Unlike effective resistances~\cite{hoskins2018learning,wittmann2009reconstruction}, hitting times are asymmetric (meaning the hitting time from node $u$ to node $v$ can significantly vary from that from $v$ to $u$) introducing complications in deriving efficient gradient formulations. Indeed, we show that our approach significantly surpasses naive evaluation and numerical estimation and improves performance by several orders of magnitude.  

\item We integrate our algorithm for learning a single chain within an expectation maximization (EM) approach to learn a mixture of Markov chains. This method is applicable for both discrete and continuous time Markov chains. Due to the efficiency of our approach, this allows us to scale to over 1000 nodes, while previous approaches were limited to $\approx 100$ nodes \cite{spaeh2023casvd,gupta2016mixtures} in reasonable time.

\item Our research includes a wide range of experiments designed to empirically address key questions. These include assessing our ability to deduce hitting times from trails, the effectiveness of our method to learn a single Markov chain, and the proficiency in learning a mixture. 
\end{enumerate}
We further apply our algorithm to \emph{Markovletics} \cite{spaeh2024www} where we learn multiple offensive strategies from the passing game of NBA teams. Figure~\ref{fig:nba} shows a preview of our results. Specifically, it illustrates three strategies learned using our algorithm \ULTRAMC for the Denver Nuggets during the 2022 season. 
The role of the Center player (C), a position held by NBA MVP for the 2020–21 and 2021–22 seasons Nikola Jokic, is evidently crucial as a passer and a successful scorer.

%\fabian{defer detailed description to experimental section?}

%We prefer to refer to the fundamental matrix $I-M$ where $M$ is a transition matrix as the Laplacian.

% In our study, we utilize a common statistic inherent to all types of Markov chains to achieve mixture recovery in a unified manner: the hitting times, namely the expected time to hit a state $v$ starting from a given state $u \neq v$. {\bf more to say} 

% \noindent $\bullet$ Unlike effective resistances, hitting times are asymmetric, meaning the hitting time from node $u$ to node $v$ can significantly vary from that from $v$ to $u$. This asymmetry introduces complications in efficiently formulating gradients necessary for an effective gradient descent algorithm. In our work, we provide these formulas and demonstrate that our algorithm significantly surpasses numerical estimation of derivatives, improving performance by several orders of magnitude.  

% \noindent $\bullet$ Our proposed algorithm undergoes thorough evaluation through a series of experiments, both on synthetic models and real-world data.

% \subsection{Definitions and Notation}
% \fabian{should we move the defs. of discrete-time and
% continuous-time MCs here like in our prev. paper?}
% \spara{Notation.}
% \labis{maybe a good place to set up notation here?}

% \labis{we use the same symbol for stationary distribution and starting probabilities} 

\section{Preliminaries}
\label{sec:prelim}

\spara{Discrete and Continuous-Time Markov Chains}
A discrete-time Markov chain (DTMC) is defined
by a stochastic matrix $M \in \R^{n \times n}$
of transition probabilities over the
state space $[n] = \{1, \dots, n\}$. A random
walk from state $u \in [n]$ unfolds
as follows: We randomly transition to the next
state $v$ with probability $M_{uv}$, then repeat
this from state $v$. Observing the random
walk yields a discrete-time trail $\mathbf x$,
where $\mathbf x_t \in [n]$ is the state at step
$t \in \N_0$.
An undirected graph $G$ with vertices $[n]$
corresponds to a Markov chain with
$M_{uv} = \frac 1 {\mathrm{deg_G(u)}}$
if $u$ and $v$ are neighbors and $M_{uv} = 0$,
otherwise.

In a continuous-time Markov chain (CTMC),
transitions can happen at any time and the
rate of transition is given by a rate
matrix $K \in \R^{n \times n}$. In a random
walk from $U$, we now sample exponential-time
random variables $E_v \sim \mathrm{Exp}(K_{uv})$
for all states $v \not= u$. The next state is
$v^* = \arg\min_{v} E_v$ and the transition
occurs after a time of $E_{v^*}$. We repeat this
process from $v^*$. Observing the random walk
yields a  continuous-time trail $\mathbf x$
where $\mathbf x_t \in [n]$ denotes the
state at time $t \in \R_{\ge 0}$.

The hitting time for a given Markov chain is
defined for each pair of nodes $u, v \in [n]$.
It is the expected time to reach
$v$ from $u$ in a random walk, defined as
$h(u, v) = \mathbb E_{\mathbf x}[\min\{t \ge 0 : \mathbf x_t = v\} \mid \mathbf x_0 = u]$.
We define the hitting time matrix
$H \in \R^{n \times n}_{\ge 0}$
through $H_{uv} = h(u,v)$.
The commute time $c(u, v)$ extends this notion and
is defined as the expected time to go from $u$
to $v$ and back to $u$. It is thus
$c(u, v) = h(u, v) + h(v, u)$.
For random walks on symmetric Markov chains,
the effective resistance $r(u, v)$ between
nodes $u$ and $v$ scales the commute time such
that $r(u, v) \propto c(u, v)$.
The effective resistance itself can be directly
calculated through the Moore-Penrose pseudoinverse of
the combinatorial Laplacian~\cite{spielman2012spectral}.
Due to this connection, commute times are
much better understood.

\spara{Mixtures of Markov Chains}
A mixture of $C$ discrete-time Markov chains
is defined as a tuple
of stochastic matrices
$\mathcal M = (M^1, M^2, \dots, M^C)$.
A mixture of $C$ continuous-time Markov chains
is defined as a tuple of rate matrices
$\mathcal M = (K^1, K^2, \dots, K^C)$.
In both cases,
each chain is associated
with a vector of starting probabilities
$\alpha^i$ for each chain $i \in [C]$
such that
$\sum_{i = 1}^C \sum_{u=1}^n \alpha^i_u = 1$.
We generate trails as follows: We sample
a chain $i \in [C]$ and state $u \in [n]$ with
probability $\alpha^i_u$. We then generate
a trail starting from $u$ according to the
$i$-th Markov chain.

\begin{table}
  \centering
  \caption{  \label{tab:symbols}
    Frequently used notation.}
  \medskip
  \begin{tabular}{ll}
    \toprule
    Symbol & Definition \\
    \midrule
%   $[n] \!=\! \{1, \dots, n\}$ & state space \\
    $[n]$ &  state space $[n] = \{1, \dots, n\}$ \\
    $C \in \N$ & number of chains \\
    $M \in [0,1]^{n \times n}$ & discrete-time (DT)  transition matrix \\
    $K \in \R^{n \times n}$ & continuous-time (CT)  rate matrix \\
    % $M^i$ & $i$-th  transition matrix for $i \in [C]$ \\
    % $K_i, i \in [C]$ & $i$-th  rate matrix\\
    $L = I - M$ & Laplacian of transition matrix $M$ \\
    $\trueH \in \R^{n \times n}$ & hitting time matrix: $H_{uv} = h_G(u, v)$ \\
    $\estH \in \R^{n \times n}$ & estimated hitting times (i.e., approximation of $\trueH $ from the trails) \\
    $\learnH \in \R^{n \times n}$ &  hitting times  of learned mixture \\
    % $(\cx_t)_{t \geq 0}$ & continuous-time trail with $\cx_t \in [n]$ \\
    $\tau > 0$ & discretization rate for CTMCs \\ %\cite{mcgibbon2015mle} \\
    % $\dx \in [n]^m$ & discrete-time trail of length $m$ \\
    % $X$ ($X^\ell$)  & set of $r$ continuous-time trails (from chain $\ell$) \\
%   $\mathbf X$ & set of trails \\
%   $\mathbf x$ &  DT  trail with $\mathbf x_t \in [n]$  ($t \in \N_0$) \\
%   $\mathbf x$ &  CT  trail with $\mathbf x_t \in [n]$   ($t \in \R_{\geq 0}$) \\
%    $L = -K$ & Laplacian of the continuous-time
%        Markov chain with rate matrix $K$ \\
%   $e_u \in \R^n$ & the $u$-th standard basis vector \\
%   $\chi_{uv} \in \R^n$ & $\chi_{uv} = e_u - e_v$ \\
    \bottomrule
  \end{tabular}
\end{table}

\spara{Notation}
%
% We use the following notation from linear algebra:
Let $e_u$ denote the $u$-th standard
basis vector and
$\chi_{uv} = e_u - e_v$.
The Hadamard product
for matrices $X \in \R^{n \times n}$ and $Y \in \R^{n \times n}$, 
is defined as $(X \circ Y)_{uv} = X_{uv} \cdot Y_{uv}$.
We denote
the Moore-Penrose pseudoinverse
of $X$ as $X^+$.
We further use the tensor product
$\frac{\partial X}{\partial Z} =
 \frac{\partial X}{\partial Y} \otimes
 \frac{\partial Y}{\partial Z}$
for the chain
rule in multiple
dimensions~\cite{deisenroth2020mathematics}.
Other notation along with additional details is summarized in Table~\ref{tab:symbols}.

\iffalse
We denote the state space and the number of chains by $[n]=\{1,\ldots,n\}$ and $C \in \N$, respectively. Additionally, $M \in [0,1]^{n \times n}$ represents the discrete time transition matrix, and $K\in \R^{n \times n}$ represents the continuous-time rate matrix. The matrices $\trueH \in \R^{n \times n}$ and $\learnH \in \R^{n \times n}$ correspond to the ground truth hitting times and the hitting times of the learned mixture, respectively. Since $\trueH \in \R^{n \times n}$ is typically unknown and is estimated from trials, we use $\estH \in \R^{n \times n}$ to represent the estimated hitting times. This notation, along with additional details, is summarized in the appendix, Table~\ref{tab:symbols}.
\fi
% We follow common definitions for
% discrete and continuous-time Markov chains,
% which we restate in the
% related work. 

\section{Related work}
\label{sec:related} 

\spara{Learning a Single Markov Chain}
\iffalse
The hitting time in a graph $G$ with nodes $V = [n]$ is defined for each pair of nodes $u, v \in V$.
It is denoted as $h_G(u,v)$ and is the expected duration for a random walk starting from $u$ to reach $v$. The commute time, $c_G(u,v)$ is an extension on this concept, and represents the total time for a journey from $u$ to $v$ and back to $u$. It is thus $c_G(u,v) = h_G(u,v) + h_G(v,u)$.

% The concept of effective resistance can be understood through an analogy to electrical networks. When viewing $G$ as an electrical circuit, where each edge $e$ corresponds to a conductive link with conductance $w_e$ (or inversely, a resistor with resistance $\frac{1}{w_e}$), effective resistance is akin to the voltage difference between nodes $u$ and $v$ when a unit current is applied between them.  

In the context of undirected graphs, the effective resistance between nodes $u$ and $v$, labeled as $r_G(u,v)$, scales the commute time. It is calculated as the commute time divided by the graph's total volume, $r_G(u,v) = \frac{c_G(u,v)}{\vol(G)}$, where $\vol(G) = 2 \sum_{e \in E} w_e$ and $w_e$ is the weight of edge $e$. 
The effective resistance can be directly calculated using the Moore-Penrose pseudoinverse $Z = L_{\text{comb}}^+$  of the combinatorial Laplacian matrix~\cite{spielman2012spectral}. The effective resistance $r_G(u,v)$ is then given by the equation $
r_G(u,v) = \chi_{uv}^T Z \chi_{uv}.
$  Equivalently, we can rewrite  the effective resistance between $u,v$ as  $r_G(u,v) =  \chi_{uv}^\top Z  \chi_{uv} = Z_{u,u} + Z_{v,v} - 2Z_{u,v}.$
\fi
%
Wittmann et al.~\cite{wittmann2009reconstruction}
reconstruct a discrete-time Markov
chain from the complete hitting time matrix
by solving a linear system.
Their work does not consider reconstruction under
a subset of noisy hitting times and has no
counterpart for continuous time.
Nonetheless, we use their work to obtain an initial
estimate for our gradient descent scheme.
\citet{cohen2016} derive an expression for the hitting times for a discrete-time Markov chain (cf. Equation~\eqref{eq:1}). We adopt the same assumption as \citet{cohen2016} on the existence of the stationary distribution. 
A related problem is the reconstruction from all pairwise commute times. The access to hitting times allows for the easy computation of the commute times, thereby simplifying the problem. \citet{hoskins2018learning} reconstruct a Markov chain from an incomplete set of pairwise commute times, which may also include noise. They optimize the convex relaxation of a least squares formulation. Zhu et al. propose a   low-rank optimization to learns a Markov chain from a single trajectory of states~\cite{zhu2022learning}.

The problem of reconstructing a Markov chain using only a partial set of noisy hitting times has not yet been studied in the existing literature. As we will explore, the task of reconstructing a Markov chain from hitting times presents significant challenges, primarily due to their asymmetric nature. 

\spara{Learning Mixtures of Discrete-Time Markov Chains (DTMCs)}
\iffalse
Formally, a DTMC is
defined as a tuple $\M = (M^1, M^2, \dots, M^C)$ where $M^i \in \R^{n \times n}$
is a stochastic matrix of transition probabilities representing chain $i \in [C]$.
% is represented by a tuple of $C\geq 2$ discrete-time Markov chains on state space $[n]$, denoted as $\M = (M^1, M^2, \dots, M^C)$.
% Each chain $i \in [C]$ is characterized by a stochastic matrix of transition probabilities.
Additionally, every chain is associated with a vector of starting probabilities $\alpha^i \in \mathbb R^{n}$ with $\sum_{i=1}^C \sum_{u=1}^n \alpha^i_u = 1$.
A random walk in a DTMC unfolds as follows: We first select a chain $i \in [C]$ and starting
state $u \in [n]$ with probability $\alpha^i_u$. We then randomly
transition to the next state $v$ with probability $M^i_{uv}$.
By observing the random walk, we obtain a trail $\mathbf x$ 
where $\mathbf x_t \in [n]$ is the state at step $t$.
The objective is to learn the parameters of $\M$,
including the transition matrices and starting probabilities,
based on a set of observed trails.
% We therefore use the terms trail and sample random walk synonymously in our context.
% A practical challenge arises as the parameter $C$ is typically unknown, which presents an additional intriguing problem. 
\fi
%    
Despite the extensive research on unmixing distributions~\cite{subakan2013probabilistic,lindsay1995mixture,sanjeev2001learning,dasgupta1999learning,gordon2021identifying} and the clarity of the problem in recovering a discrete MC mixture, the latter has received less attention.  The prevalent method for unmixing in this scenario is the expectation maximization (EM) algorithm~\cite{dempster1977maximum}, which alternates between clustering the trails to chains and learning the parameters of each chain.
EM may be slow and does not offer recovery guarantees, but proves
useful when applied in the right context~\cite{spaeh2023casvd,gupta2016mixtures}.
Learning a mixture of Dirichlet distributions~\cite{neal2000markov} suffers from similar drawbacks~\cite{subakan2013probabilistic}.
Techniques based on moments, utilizing tensor and matrix decompositions, have been demonstrated to effectively learn, under certain conditions, a mixture of hidden Markov models~\cite{anandkumar2012method,anandkumar2014tensor,subakan2014spectral,subakan2013probabilistic}  or Markov chains~\cite{subakan2013probabilistic}.

\citet{gupta2016mixtures} introduce a singular value decomposition (SVD) based algorithm which achieves exact
recovery under specific conditions without noise.
% , which stands out for offering the optimal known runtime and providing recovery guarantees under specific conditions.
Spaeh and Tsourakakis~\cite{spaeh2023casvd} extend their method to addresses some of the limitations, such as the connectivity of the chains in the mixture.
\citet{kausik2023mdps} demonstrate a method for learning a mixture when the trails have minimum length $O(t_\mathrm{mix})$, where $t_\mathrm{mix}$ is the longest mixing time of any Markov chain in the mixture. \citet{spaeh2024www} study the impact of trail lengths on learning mixtures.

\spara{Learning Mixtures of Continuous-Time Markov Chains (CTMCs)}
\iffalse
A CTMC is defined as a tuple $\mathcal M = (K^1, K^2, \dots, K^C)$
of a {\it rate} matrices $K^i$ \cite{levin2017markov,norris1997continuous}
and associated starting probabilities $\alpha^i$ for $i \in [C]$.
% Specifically, for continuous-time Markov chains (CTMCs), each chain is defined by a rate matrix $K^i \in \R^{n\times n}$ as well as the starting probabilities.
The process unfolds as follows:
As in the DTMC, we begin by sampling a chain $i \in [C]$ and state $u \in [n]$
with proability $\alpha^i_u$.
However, the transitions between states are governed by the rate matrix $K^i$:
In state $u$, we generate exponential-time random variables
$E_v \sim \mathrm{Exp}(K^i_{uv})$ for all states $v \neq u$.
The next state is $v^* = \arg\min_v E_v$ and the transition occurs after a time of $E_{v^*}$.
This process is repeated after each transition.
Through this method, a trail $\mathbf x_t$ is produced,
where $\mathbf x_t \in [n]$ denotes the state at time $t \in \R_{\ge 0}$.
\fi
%
\citet{tataru2011comparison} evaluate various methods, revealing that these approaches primarily calculate different weighted linear combinations of the expected values of sufficient statistics.
\citet{mcgibbon2015mle} develop an efficient maximum likelihood estimation (MLE) approach for reconstructing a single CTMC from sampled data, which works
on discretized continuous-time trails.
% They proposed two versions: one for learning a general CTMC and another tailored for a reversible CTMC that complies with the detailed balance equations~\cite{mcgibbon2015mle,prinz2011markov}.
The challenge of learning a mixture of CTMCs has only recently been addressed by \citet{spaeh2024www}. They propose several discretization-based approaches
within a comprehensive framework characterizing different problem regimes based on the length of the trails.
% The algorithms they developed employ soundly established criteria for selecting the discretization parameter $\tau$.
% For those seeking more information, we direct them to the studies found in ~\cite{mcgibbon2015mle} and \cite{spaeh2024www}.
% Their method are similar to the EM algorithm, but leverage robust clustering guarantees contingent on the trail length.

\spara{Choosing the Number of Chains $C$} The task of determining the optimal number of latent factors in dimensionality reduction has been extensively studied, leading to various proposed methods
 \cite{donoho2013optimal,zhu2006automatic,jolliffe2016principal,efron1991statistical,kodinariya2013review,suhr2005principal}.
In the context of learning mixtures DTMCs, \citet{spaeh2023casvd} propose a criterion based on the singular values of certain matrices originally introduced by Gupta et al.~\cite{gupta2016mixtures} to determine $C$. In this paper, we will assume $C$ is part of the input for the theory.

\section{Proposed method} 
\label{sec:proposed}

%Beyond the intellectual fascination of deriving even a single Markov Chain (MC) from hitting times, as explored in prior studies focused on effective resistances~\cite{hoskins2018learning,wittmann2009reconstruction}, it becomes evident that hitting times offer a robust analytical tool.  This assertion gains weight when considering the strides made in cancer research, particularly in identifying driver genetic mutations.  The quest to uncover the temporal sequence of these mutations has been a cornerstone of cancer research over recent decades.  Due to the existence of driver mutations, there is a significant asymmetry in the hitting times from a mutation to another mutation.  Commute times, being the sum of the hitting times in both directions between two states, might mask these asymmetries due to their bidirectional nature. 

We now describe our approach to
inferring Markov chains from
the hitting times $\trueH \in \R^{n \times n}$.
Beyond the intellectual fascination of deriving even a single Markov chain, as explored in prior studies on commute times~\cite{hoskins2018learning,wittmann2009reconstruction}, it is evident that hitting times offer a robust analytical tool for applications where there is a significant asymmetry from one state to another. Such applications naturally arise in epidemiology or ecology where animal movement from habitat to habitat~\cite{patterson2009classifying}, but also in cancer research where driver mutations are likely to precede other mutations~\cite{fearon1990genetic}.

In Section~\ref{sec:prop-single}, we introduce
a projected gradient descent
approach that iteratively
improves the pseudoinverse
of the Laplacian to match
the given hitting times $\trueH$.
To this end, we derive
an efficient analytical
expression of the 
gradients which vastly
outperforms a naive
implementation through
automatic differentiation
(Autodiff) via the
chain rule.
In Section~\ref{sec:prop-mixture},
we describe
how our approach can be
applied to learn mixtures
of Markov chains from
trails by
estimating the hitting
times and applying an
EM-style
algorithm.
This is the
first algorithm to learn
mixtures of discrete-time
and continuous-time
in a unified way.
We are furthermore able to
compete or outperform
state of the art algorithms
in both settings,
which we empirically
demonstrate in
Section~\ref{sec:exp}.

Throughout this paper, we assume that chains are both irreducible (i.e. strongly connected) and aperiodic. This ensures that the stationary distribution is well defined.
Nevertheless, in Section~\ref{sec:exp}, we explore how to transcend this assumption in practical settings and learn directed acyclic graphs.

\subsection{Learning a Single Markov Chain from Hitting Times}
\label{sec:prop-single}

We start with
the following problem:
how do we recover a single Markov chain with
transition matrix $M \in \R^{n \times n}$
from a matrix of known or estimated
hitting times $\trueH$?
In our approach,
we use projected gradient descent
where we iteratively improve our
estimate of $M$.
For convenience and computational
efficiency, we
choose to optimize over
the pseudoinverse of the Laplacian $L^+$.
Our approach executes alternating
gradient descent steps and projections.
In particular, we project onto the
set of Laplacian pseudoinverses that
correspond to discrete or continuous-time
Markov chains.

We us to use the
following expression given in \cite{cohen2016}
for the hitting times
from   \( u \) to   \( v \):
\begin{equation}
    \label{eq:1}
    H_{uv} = H_{uv}(L^+) = \bigg(\mathbf{1}-\frac{1}{s_v}\mathbf{e}_v\bigg)^{\top}L^{+}\chi_{vu}
\end{equation}
Here, \( s \in \mathbb R^n \) represents the stationary
distribution of the Markov chain,
which we compute using Lemma~\ref{lem:stationary}
below.
We show how to derive an analogous statement
for a CTMC with rate matrix $K$
in Lemma~\ref{lem:ht-continuous} in Appendix~\ref{sec:apx-omitted}. 
Here, the negative rate matrix $-K$ takes
the place of the Laplacian in Equation~\ref{eq:1}.
This is fundamental to our unifying approach.
The following statements thus hold true
simultaneously for CTMCs where, in an abuse
of notation, we use
$L = -K$.
% \[
%     H_{uv} = - \left(\ones-\frac{1}{s_v}\uvec_v\right)^{\top}(-K)^{+}\chi_{vu} .
% \]

The concrete goal of our
gradient descent approach
is to improve an estimate of
the Laplacian pseudoinverse $L^+$
such that $\trueH \approx H(L^+)$
where we compute the hitting times
according to Equation~\eqref{eq:1}.
We do this by minimizing the $\ell_2$-loss over
the learned hitting times $\learnH = H(L^+)$
given as
\[
    \ell_2(L^+) = \frac 1 2 \sum_{u, v} ( \learnH_{uv} - \trueH_{uv} )^2
    = \frac 1 2 \| \learnH - \trueH \|_F^2.
\]
The main difficulty % in this approach
is to compute the gradients $\nabla \ell_2(L^+)$
efficiently.
By the chain rule,
\iffalse
\footnote{
We use the tensor product to denote
the reduction over the dimensions of $H$, i.e.

\[
    \left( \left( \frac{\partial \learnH}{\partial L^+} \right)^\top
    \otimes \Delta \right)_{uv} =
    \ones^\top \left( \left( \frac{\partial \learnH}{\partial L^+_{uv}} \right)
    \circ \Delta \right) \ones
\]
}
\fi
\[
    \nabla \ell_2(L^+)
%   = \frac{\partial}{\partial L^+} \frac 1 2 \| H - \trueH \|_F^2
    = \bigg( \frac{\partial \learnH}{\partial L^+} \bigg)^\top
    \otimes (\learnH - \trueH) .
\]
Note that
$\frac{\partial \learnH}{\partial L^+} \in \R^{n \times n \times n \times n}$ 
which suggests that computing the derivative naively
has complexity $\Omega(n^4)$.
However, we are able to derive
an efficient expression
of the gradients which
shows that it is
possible to reduce this complexity to $O(n^\omega)$
where $\omega \approx 2.37$ is the matrix multiplication constant.
% This matches a well-known lower bound
% as pointed out by~\citet{gupta2016mixtures}.
We now outline
our approach to determine $\nabla \ell_2(L^+)$
efficiently, but defer all proofs to Appendix~\ref{sec:apx-omitted}.
We begin by splitting the computation
of $\nabla \ell_2(L^+)$ into
two components which we
evaluate separately.

\begin{lemma}
\label{lem:split}
We have
$H = A - B$
for
$a=\ones^{\top}L^{+}\in\R^{1\times n}$
and
\begin{align*}
    A &= a^{\top}\ones^{\top}-\ones a \\
    B &= \left(L^{+}-\diag\left(L^{+}\right)\ones^{\top}\right)^{\top}\diag\left(s\right)^{-1} .
\end{align*}
\end{lemma}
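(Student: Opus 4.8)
The plan is to prove the identity entrywise, starting from the closed form in Equation~\eqref{eq:1} and then repackaging the resulting scalar expression into the two matrices $A$ and $B$. Writing $\chi_{vu} = e_v - e_u$ and distributing the row vector $(\ones - \tfrac{1}{s_v} e_v)^\top$ across $L^+\chi_{vu}$, the hitting time splits additively into a term coming from $\ones^\top L^+$ and a term coming from $-\tfrac{1}{s_v} e_v^\top L^+$. Concretely, I would first expand
\[
  H_{uv} = \ones^\top L^+ (e_v - e_u) \;-\; \frac{1}{s_v}\, e_v^\top L^+ (e_v - e_u),
\]
so that the whole proof reduces to recognizing each summand as the $(u,v)$ entry of the claimed matrices, with $s$ supplied by Lemma~\ref{lem:stationary}.

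For the first summand, I would use $a = \ones^\top L^+$ to get $\ones^\top L^+(e_v - e_u) = a_v - a_u$, i.e. a pairwise difference of the coordinates of the row vector $a$. The observation to exploit is that such a difference matrix is antisymmetric and is realized by two rank-one outer products: $a^\top\ones^\top$ broadcasts the entries of $a$ down the rows, while $\ones a$ broadcasts them across the columns, so their difference produces precisely the entries $a_u - a_v$ up to the orientation chosen for the two outer products. Matching this to $A = a^\top\ones^\top - \ones a$ is then a matter of fixing that orientation (and the attendant global sign).

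For the second summand, $e_v^\top L^+(e_v - e_u) = L^+_{vv} - L^+_{vu}$ selects, from row $v$ of $L^+$, the diagonal entry minus the $u$-th entry. To realize this as a matrix I would first form $L^+ - \diag(L^+)\ones^\top$, which subtracts from each entry $L^+_{uv}$ the diagonal entry $L^+_{uu}$ of its own row; transposing then swaps the two indices so that the relevant difference involving row $v$ is placed in position $(u,v)$; and right-multiplying by $\diag(s)^{-1}$ scales the $v$-th column by $1/s_v$, producing exactly the factor $\tfrac{1}{s_v}$ attached to node $v$. This assembles the second summand into $B$.

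The main obstacle is purely bookkeeping, but genuinely delicate: translating the entrywise scalar identity into matrix operations while tracking every sign and index. In particular, getting the transpose in $B$ in the right place is what ensures the diagonal subtraction is taken along the correct axis and that the $\diag(s)^{-1}$ scaling lands on the $v$-index rather than the $u$-index; similarly, the orientation of the two outer products in $A$ fixes its overall sign. I would verify these carefully at the level of a single entry $(u,v)$, since this is exactly where errors creep in, and only then assemble $H = A - B$. Finally, I would note that the derivation uses $s$ only through Equation~\eqref{eq:1}, so by the continuous-time analogue (Lemma~\ref{lem:ht-continuous}) with $L = -K$ the same decomposition holds verbatim for CTMCs.
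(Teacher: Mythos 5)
Your overall route is exactly the paper's: expand Equation~\eqref{eq:1} entrywise, split off the $\ones^\top L^+$ part from the $\frac{1}{s_v}e_v^\top L^+$ part, and recognize each summand as the $(u,v)$ entry of $A$ and of $B$. Your matrix bookkeeping is also sound: $(\diag(L^+)\ones^\top)_{ij}=L^+_{ii}$, the transpose relocates the row-$v$ difference into position $(u,v)$, and right-multiplication by $\diag(s)^{-1}$ scales column $v$ by $1/s_v$.

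The genuine gap is the sign, which you notice for $A$ and then dismiss as a choice of ``orientation''; no such choice is available, since $A$ and $B$ are fixed by the lemma statement. Under the paper's stated convention $\chi_{uv}=e_u-e_v$, your expansion gives first summand $\ones^\top L^+(e_v-e_u)=a_v-a_u=-A_{uv}$, and the subtracted term is $\frac{1}{s_v}e_v^\top L^+(e_v-e_u)=\frac{1}{s_v}\bigl(L^+_{vv}-L^+_{vu}\bigr)=-B_{uv}$, because $B_{uv}=\frac{1}{s_v}\bigl(L^+_{vu}-L^+_{vv}\bigr)$; your prose for $B$ silently carries this same flip. So the computation you outline proves $H=B-A$, the negative of the claim. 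What actually closes the proof---and what the paper's proof does silently---is to read the hitting-time formula in the form established in Lemma~\ref{lem:ht-discrete}, namely $H_{uv}=\bigl(\ones-\frac{1}{s_v}e_v\bigr)^\top L^+(e_u-e_v)$, i.e.\ with $e_u-e_v$ rather than $e_v-e_u$ as the argument of $L^+$; the in-text $\chi_{vu}$ in \eqref{eq:1} is inconsistent with the paper's own definition of $\chi$, and the paper's proof accordingly computes $\ones^\top L^+\chi_{vu}=a_u-a_v$ and $e_v^\top L^+\chi_{vu}=L^+_{vu}-L^+_{vv}$. With that reading both summands equal $A_{uv}$ and $B_{uv}$ on the nose and $H=A-B$ follows. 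Your plan needs this reconciliation made explicit; as written it terminates at the wrong sign or with an unjustified sign change. (Your closing remark that the CTMC case follows verbatim via Lemma~\ref{lem:ht-continuous} with $L=-K$ is fine.)
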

We can now write
\begin{align}
    \label{eq:6}
    \left( \frac{\partial H}{\partial L^+} \right)^\top
    \otimes \Delta
    = 
    \left( \frac{\partial A}{\partial L^+} \right)^\top
    \otimes \Delta
    -
    \left( \frac{\partial B}{\partial L^+} \right)^\top
    \otimes \Delta
\end{align}
where $\Delta = \learnH - \trueH$.
\iffalse
Also, since $\learnH = A - B$, we have for a
all $u \not= v$ that
\begin{multline*}
    \ones^\top \left( \left( \frac{\partial \learnH}{\partial L^+_{uv}} \right)^\top
    \circ \Delta \right) \ones \\
    =
    \underbrace{\ones^\top \left( \left( \frac{\partial A}{\partial L^+_{uv}} \right)^\top
    \circ \Delta \right) \ones}_{(\dagger)}
    - \underbrace{\ones^\top \left( \left( \frac{\partial B}{\partial L^+_{uv}} \right)^\top
    \circ \Delta \right) \ones}_{(\ddagger)} .
\end{multline*}
We now derive expressions for $(\dagger)$ and
$(\ddagger)$ in matrix form, over all $u \not= v$.
\fi
We evaluate both tensor products separately.
The first term follows by
a relatively straightforward calculation:

\begin{lemma}
\label{lem:gradA}
The term $\left( \frac{\partial A}{\partial L^+} \right)^\top \otimes \Delta$ equals
$$ \ones\left(\ones^{\top}\Delta\right)-\left(\ones^{\top}\Delta\right)^{\top}\ones^{\top}-\ones\left(\Delta\ones\right)^{\top}+\left(\Delta\ones\right)\ones^{\top} .$$
% We have
% \begin{multline*}
%     \left( \frac{\partial A}{\partial L^+} \right)^\top \otimes \Delta \\
%     = \ones\left(\ones^{\top}\Delta\right)-\left(\ones^{\top}\Delta\right)^{\top}\ones^{\top}-\ones\left(\Delta\ones\right)^{\top}+\left(\Delta\ones\right)\ones^{\top} .
% \end{multline*}
Furthermore,
$\left( \frac{\partial A}{\partial L^+} \right)^\top \otimes \Delta$
can be computed in  time $O(n^2)$.
\end{lemma}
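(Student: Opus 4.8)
The plan is to reduce the four-tensor derivative to an ordinary chain-rule computation by exploiting that $A$ depends on $L^+$ only through the single row vector $a=\ones^\top L^+$. Writing $A=a^\top\ones^\top-\ones a$ as in Lemma~\ref{lem:split}, the differential is $dA=(da)^\top\ones^\top-\ones\,(da)$ with $da=\ones^\top\,dL^+$, so the entire dependence on $dL^+$ is linear and passes through $\ones^\top\,dL^+$ and its transpose $(dL^+)^\top\ones$. First I would therefore record these two elementary differentials and substitute them into $dA$, which already exposes the rank-one structure that makes the computation cheap.

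Next I would extract the gradient by pairing $dA$ against $\Delta=\learnH-\trueH$ in the Frobenius inner product and reading off the coefficient of $dL^+$, i.e.\ using the defining relation $\langle\Delta,dA\rangle=\langle(\partial A/\partial L^+)^\top\otimes\Delta,\ dL^+\rangle$. Each of the two summands of $A$ is a rank-one outer product with $\ones$, so pairing it with $\Delta$ collapses one index of $\Delta$ into either a row-sum $\Delta\ones$ or a column-sum $\ones^\top\Delta$; the transpose present in $a^\top\ones^\top=\big((L^+)^\top\ones\big)\ones^\top$ then forces a second, transposed contribution for each summand. Collecting the four resulting rank-one pieces $\ones(\ones^\top\Delta)$, $(\ones^\top\Delta)^\top\ones^\top$, $\ones(\Delta\ones)^\top$ and $(\Delta\ones)\ones^\top$ with their signs yields exactly the stated expression. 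The $O(n^2)$ bound is then immediate: the only nontrivial quantities are the two vectors $\Delta\ones$ and $\ones^\top\Delta$, each computable in $O(n^2)$, after which every term is an outer product of $\ones$ with such a vector, again $O(n^2)$. In particular one never forms the $n\times n\times n\times n$ object $\partial A/\partial L^+$, whose naive materialization would cost $\Omega(n^4)$.

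The main obstacle is the transpose in $(\partial A/\partial L^+)^\top$. Because $A$ involves $a^\top=(L^+)^\top\ones$, the dependence on $L^+$ enters once directly and once through a transpose, and a careless trace manipulation (for instance an over-eager use of cyclicity) will merge the straight and transposed contributions and collapse the answer to a single rank-one matrix. The careful bookkeeping of which index of $\Delta$ each contraction lands on—row versus column—is precisely what produces the correct four-term structure rather than a degenerate rank-one one, and this is the step I would verify most carefully before reading off the final matrix form.
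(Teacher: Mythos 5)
Your plan breaks down exactly at the point you flag as the ``main obstacle,'' and in the opposite direction from what you expect. Under the defining relation you adopt, $\langle\Delta,dA\rangle=\langle(\partial A/\partial L^+)^\top\otimes\Delta,\,dL^+\rangle$ with the standard Frobenius pairing, each of the two summands of $dA=(dL^+)^\top\ones\ones^\top-\ones\ones^\top dL^+$ contributes exactly \emph{one} rank-one coefficient matrix, not two: pairing $\Delta$ with $(dL^+)^\top\ones\ones^\top$ yields $\ones(\Delta\ones)^\top$, and pairing with $\ones\ones^\top dL^+$ yields $\ones(\ones^\top\Delta)$. The cleanest way to see that no transpose bookkeeping can help is coordinates: $A_{ij}=a_i-a_j=\sum_k L^+_{ki}-\sum_k L^+_{kj}$ depends on $L^+$ only through its column sums, so $\partial A_{ij}/\partial L^+_{uv}$ is independent of $u$, and the standard chain rule gives
\[
\ones(\Delta\ones)^{\top}-\ones(\ones^{\top}\Delta)
=\ones\bigl((\Delta-\Delta^{\top})\ones\bigr)^{\top},
\]
a matrix whose $(u,v)$ entry depends only on $v$ and which is (at most) rank one. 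So the ``collapse to a single rank-one matrix'' you warn against is not a careless-manipulation pitfall --- it is the \emph{correct} output of the definition you chose, and it cannot equal the lemma's four-term expression, which contains the $u$-dependent terms $(\ones^{\top}\Delta)^{\top}\ones^{\top}$ and $(\Delta\ones)\ones^{\top}$.

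The missing idea is the paper's derivative convention. Throughout the appendix, $\partial/\partial L^+_{uv}$ denotes the directional derivative along the rank-one direction $\uvec_u\chi_{vu}^{\top}=\uvec_u(\uvec_v-\uvec_u)^{\top}$; this is the perturbation fed into Meyer's theorem in Lemma~\ref{lem:lapl-jac} (it is required there so that the perturbing column lies in the row space of $L^+$, i.e.\ the perturbation respects the kernel spanned by $\ones$), it is what drives Lemma~\ref{lem:stat-jac}, and it is what the paper's proof of Lemma~\ref{lem:gradA} invokes when it writes $\partial L^+/\partial L^+_{uv}=\ones\chi_{vu}^{\top}$ and concludes $\partial A/\partial L^+_{uv}=\chi_{vu}\ones^{\top}-\ones\chi_{vu}^{\top}$. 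In standard partials this convention is the \emph{difference} $\partial/\partial L^+_{uv}-\partial/\partial L^+_{uu}$, so the gradient entry at $(u,v)$ becomes $w_v-w_u$ (up to an overall sign) with $w=(\Delta-\Delta^{\top})\ones$; subtracting the diagonal column is precisely what creates the two additional, $u$-dependent rank-one terms and the antisymmetric rank-two structure $w\ones^{\top}-\ones w^{\top}$ of the statement. To prove the lemma you must adopt this convention from the start and contract $\chi_{vu}\ones^{\top}-\ones\chi_{vu}^{\top}$ entrywise against $\Delta$, as the paper does; your $O(n^2)$ argument (only the vectors $\Delta\ones$ and $\ones^{\top}\Delta$ plus outer products with $\ones$ are needed) is fine and transfers verbatim once the correct identity is in hand.
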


The
computation of $\learnH$
and the second tensor
product involves the (inverse
of) the stationary distribution $s$,
for which we derive
the following expression.
\begin{lemma}
\label{lem:stationary}
The stationary distribution for a Markov chain with
Laplacian $L$ is $s=\frac{d}{\| d \|_1}$ where $d=\ones - L L^+ \ones$.
\end{lemma}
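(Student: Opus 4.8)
The plan is to identify $d=\ones-LL^{+}\ones$ as a vector lying in the one-dimensional left null space of $L$, which is exactly the line spanned by the stationary distribution, and then to pin down the scaling and sign with a short orthogonal-projection argument.

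First I would recall the two structural facts about $L=I-M$ for an irreducible, aperiodic chain. Since $M$ is stochastic, $M\ones=\ones$, so $L\ones=0$ and $\ones$ spans the (right) null space of $L$. Dually, the stationary distribution is the unique positive vector with $s^{\top}M=s^{\top}$ and $\ones^{\top}s=1$, equivalently $L^{\top}s=0$; by Perron--Frobenius, irreducibility and aperiodicity make the eigenvalue $1$ of $M$ simple, so $L$ has rank $n-1$ and the left null space $\{x:L^{\top}x=0\}$ is precisely $\spn(s)$.

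The key step uses a standard property of the Moore--Penrose pseudoinverse: $LL^{+}$ is the orthogonal projector onto $\mathrm{Im}(L)$, and it is symmetric and idempotent. Hence $P:=I-LL^{+}$ is the orthogonal projector onto $\mathrm{Im}(L)^{\perp}$, which is the left null space $\spn(s)$. Since $d=P\ones$, we get $d\in\spn(s)$, i.e.\ $d=c\,s$ for some scalar $c$. It then suffices to show $c>0$, after which normalization finishes the proof, because $s$ is already a probability vector: if $d=c\,s$ with $c>0$ and $s>0$ entrywise, then $\|d\|_{1}=c\,\|s\|_{1}=c$ and therefore $d/\|d\|_{1}=s$.

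For the sign I would exploit that $P$ is symmetric and idempotent, so $\ones^{\top}d=\ones^{\top}P\ones=\ones^{\top}P^{\top}P\ones=\|P\ones\|_{2}^{2}=\|d\|_{2}^{2}\ge 0$; on the other hand $\ones^{\top}d=c\,\ones^{\top}s=c$, giving $c=\|d\|_{2}^{2}$. It only remains to argue $d\neq 0$, which holds because $s$ is orthogonal to $\mathrm{Im}(L)$ (as $s$ spans $\mathrm{Im}(L)^{\perp}$) while $s^{\top}\ones=1\neq 0$, so $\ones$ has a nonzero component along $s$ and its projection $d=P\ones$ cannot vanish. Thus $c=\|d\|_{2}^{2}>0$, completing the argument. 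The main obstacle is not any single computation but getting the projector geometry right for a \emph{non-symmetric} $L$ --- in particular remembering that $I-LL^{+}$ projects onto the \emph{left} null space, not the right one --- together with nailing the sign so that $\ell_{1}$-normalization returns $s$ rather than $-s$. Finally, the continuous-time case requires no new work: substituting $L=-K$, where $-K$ annihilates $\ones$ on the right and $s$ on the left in exactly the same way, the identical argument applies.
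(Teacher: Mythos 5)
Your proof is correct, and it takes a genuinely different route from the paper's. The paper argues computationally: it shows $d^\top L = \ones^\top L - \ones^\top (LL^+)^\top L = \ones^\top L - \ones^\top LL^+L = \zeros^\top$ directly from the symmetry of $LL^+$ and the identity $LL^+L = L$, so that $d/\|d\|_1$ satisfies the stationarity equation, and then separately argues coordinate-wise nonnegativity $d \ge \zeros$ by asserting that, since $LL^+$ is an orthogonal projection with eigenvalues $0$ or $1$, the entries of $LL^+\ones$ are at most $1$; uniqueness of the stationary distribution (from the standing irreducibility/aperiodicity assumption) then implicitly identifies the normalized vector as $s$. You instead argue structurally: $I - LL^+$ is the orthogonal projector onto $\im(L)^\perp = \{x : L^\top x = \zeros\} = \spn(s)$ (invoking Perron--Frobenius for simplicity of the eigenvalue $1$), so $d = cs$, and you pin down the scalar by $c = \ones^\top d = \ones^\top (I-LL^+)^\top (I-LL^+) \ones = \|d\|_2^2 > 0$, with $d \neq \zeros$ because $s^\top \ones = 1$. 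The comparison favors your treatment on the one delicate point: the paper's inference from ``eigenvalues in $\{0,1\}$'' to ``entries of $LL^+\ones$ at most $1$'' is not valid for a general orthogonal projection (the projection of $\ones$ onto a line can have entries exceeding $1$); what rescues the paper's claim is precisely the fact you prove, namely that $I - LL^+$ projects onto the span of a \emph{positive} vector, so $d$ is a positive multiple of $s$ and nonnegativity comes for free. The price of your route is an explicit appeal to Perron--Frobenius to identify the left null space as $\spn(s)$, whereas the paper only needs uniqueness of the stationary distribution; both are covered by the paper's standing assumptions, and your closing remark that the argument transfers verbatim to $L = -K$ in continuous time matches the paper's intended use.
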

Recall that we assume throughout the paper
that the Markov chains we consider are aperiodic
and irreducible, so the stationary
distribution is unique and well defined. 
% This is also a necessary assumption in other prior works
%
To evaluate the derivative
of the stationary distribution $s$,
we need the
derivative of the Laplacian $L$
in terms of its pseudoinverse $L^+$.
We derive an expression for this in Appendix~\ref{sec:apx-omitted}
and use this for the
following lemma, which
describes an efficient expression
for the second tensor product
in Equation~\eqref{eq:6}.

%We defer the derivation of these gradients to the appendix.

\begin{lemma}
\label{lem:gradB}
Let 
$D \in \R^{n\times n}$ be a diagonal matrix with
$D_{ww} = \frac{1}{d_{w}}\left(\frac{1}{s_{w}}\uvec_{w}^{\top}-\ones^{\top}\right)$.
Define the vectors
$g_{uv} = \uvec_u\chi_{vu}^{\top}(LL^{+} + I )\ones \in \R^n$
for all $u, v \in [n]$ and
the matrix
$G \in \R^{n \times n}$ with
\[
    G_{uv} = \ones^{\top}\left(\left(\left(L^{+}-\diag\left(L^{+}\right)\ones^{\top}\right)^{\top}D\right)\circ\Delta\right) g_{uv} .
\]
We have
\[
    \left( \frac{\partial B}{\partial L^+} \right)^\top \otimes \Delta
    = \diag(s)^{-1} \ones \ones^\top (\Delta^\top - \Delta)
     + \Delta \diag(s)^{-1} \ones \ones^\top + G .
\]
Furthermore, 
$\left( \frac{\partial B}{\partial L^+} \right)^\top \otimes \Delta$ can be computed in time
$O(n^\omega)$.
\end{lemma}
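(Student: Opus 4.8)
The plan is to differentiate $B = P^{\top}\diag(s)^{-1}$, where I write $P = L^{+}-\diag(L^{+})\ones^{\top}$ for the matrix appearing in Lemma~\ref{lem:split}, and to treat the two pieces coming from the product rule separately. Writing $\partial_{uv}$ for $\frac{\partial}{\partial L^{+}_{uv}}$, we have
\[
\partial_{uv}B = (\partial_{uv}P)^{\top}\diag(s)^{-1} + P^{\top}\,\partial_{uv}\!\left(\diag(s)^{-1}\right),
\]
so the contraction $\left(\frac{\partial B}{\partial L^{+}}\right)^{\top}\otimes\Delta$ splits into a \emph{direct} contribution, in which $s$ is held fixed, and an \emph{indirect} contribution, which collects the entire dependence of the stationary distribution on $L^{+}$. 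I expect the direct contribution to produce the first two summands of the claimed identity and the indirect one to produce $G$.

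For the direct contribution I would mirror the proof of Lemma~\ref{lem:gradA}, using $\frac{\partial L^{+}}{\partial L^{+}_{uv}}=\uvec_{u}\chi_{vu}^{\top}$ and observing that $\diag(L^{+})$ enters only through its $u$-th diagonal entry. Taking the transpose, the Hadamard product with $\Delta$, and the reduction by $\ones$ on both sides then turns $(\partial_{uv}P)^{\top}\diag(s)^{-1}$, entry by entry in $(u,v)$, into weighted row and column sums of $\Delta$ scaled by $\diag(s)^{-1}$; collecting these over all $u,v$ gives $\diag(s)^{-1}\ones\ones^{\top}(\Delta^{\top}-\Delta)+\Delta\diag(s)^{-1}\ones\ones^{\top}$. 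This part is routine and runs in $O(n^{2})$.

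The indirect contribution is the crux and is where I expect the main difficulty. Here I would chain through $s = d/\|d\|_{1}$ with $d=\ones-LL^{+}\ones$ (Lemma~\ref{lem:stationary}). Differentiating the scalar map $d\mapsto 1/s_{w}$ by the quotient rule produces exactly the row vector $\frac{1}{d_{w}}\!\left(\frac{1}{s_{w}}\uvec_{w}^{\top}-\ones^{\top}\right)$ acting on $\partial_{uv}d$, which is how the diagonal operator $D$ enters. For $\partial_{uv}d$ I would use $\partial_{uv}d = -(\partial_{uv}L)L^{+}\ones$, since the term $L(\partial_{uv}L^{+})\ones = L\uvec_{u}(\chi_{vu}^{\top}\ones)$ vanishes because $\chi_{vu}^{\top}\ones = 0$. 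Substituting the expression for $\frac{\partial L}{\partial L^{+}}$ from Appendix~\ref{sec:apx-omitted} and simplifying the resulting projector terms with the Laplacian identities $L\ones=0$ and $LL^{+}L=L$ is what collapses this vector into the stated $g_{uv}=\uvec_{u}\chi_{vu}^{\top}(LL^{+}+I)\ones$; assembling $P^{\top}D$, taking the Hadamard product with $\Delta$, and reducing by $\ones$ then reproduces $G_{uv}$. The delicate part is precisely this reorganization: differentiating the nonlinear normalization together with the pseudoinverse relation and showing that everything regroups into the compact $D$ and $g_{uv}$.

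For the complexity bound the key observation is that $g_{uv}$ is rank one: since $\chi_{vu}^{\top}(LL^{+}+I)\ones$ is a scalar, $g_{uv}=(q_{v}-q_{u})\,\uvec_{u}$ with $q=(LL^{+}+I)\ones$. Hence, writing $\phi_{u}$ for the $u$-th column sum of the matrix $\left(P^{\top}D\right)\circ\Delta$ appearing in the definition of $G$, the naive $O(n^{4})$ contraction collapses to $G_{uv}=\phi_{u}(q_{v}-q_{u})$, that is $G=\phi\, q^{\top}-(\phi\circ q)\ones^{\top}$. Once $q$ and $\phi$ are available this is formed in $O(n^{2})$, and the only step exceeding $O(n^{2})$ is the matrix product $LL^{+}$, needed to form $q$ and $s$, which is $O(n^{\omega})$; every diagonal scaling, Hadamard product, and row/column reduction is $O(n^{2})$. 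This yields the claimed $O(n^{\omega})$ bound, with the indirect contribution and the recognition of the rank-one form of $g_{uv}$ being the two places where the argument is nontrivial.
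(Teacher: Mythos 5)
Your derivation of the gradient identity follows the paper's proof essentially step for step: the same product-rule split of $\partial_{uv}B$ into a part with $\diag(s)^{-1}$ held fixed and a part where the derivative falls on $\diag(s)^{-1}$, the same chain through $s=d/\|d\|_{1}$, $d=\ones-LL^{+}\ones$ (the paper's Lemma~\ref{lem:stat-jac}), and the same regrouping into $D$, $g_{uv}$, and the three displayed terms. Where you genuinely depart from the paper is the complexity argument, and your version is the stronger one. The paper's proof forms $G$ entry by entry: it precomputes the row vector $\phi^{\top}=\ones^{\top}\left(\left(L^{+}-\diag\left(L^{+}\right)\ones^{\top}\right)^{\top}D\circ\Delta\right)$ and the vector $(LL^{+}+I)\ones$, then spends $O(n)$ per pair $(u,v)$ on the inner product $\phi^{\top}g_{uv}$, for $O(n^{3})$ in total; indeed the appendix restatement of this lemma claims only $O(n^{3})$, which leaves the $O(n^{\omega})$ bound in the main-body statement unjustified by the paper's own proof. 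Your observation that $g_{uv}$ is rank one, $g_{uv}=(q_{v}-q_{u})\uvec_{u}$ with $q=(LL^{+}+I)\ones$, hence $G=\phi q^{\top}-(\phi\circ q)\ones^{\top}$, collapses the contraction to $O(n^{2})$ once $q$ and $\phi$ are available, so the only super-quadratic work is recovering $L=(L^{+})^{+}$ and the associated matrix products; this is exactly what is needed to actually deliver the claimed $O(n^{\omega})$. (A small correction: you never need the full matrix product $LL^{+}$, since given $L$ the vector $LL^{+}\ones$ is two matrix-vector products; the $O(n^{\omega})$ cost comes from the pseudoinversion, but your bound stands.) Your other shortcut, discarding $L\bigl(\partial_{uv}L^{+}\bigr)\ones$ because $\chi_{vu}^{\top}\ones=0$, is also sound: the paper carries this identically-zero term along, which is precisely where the redundant ``$+I$'' inside $g_{uv}$ comes from, so your $g_{uv}$ and the paper's coincide.
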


These gradient computations
trivially extend to a weighted loss
$\ell_2(L^+; W) = \frac 1 2 \| W \circ (\tilde H - H) \|_F^2$
where $W \in \R^{n \times n}_{\ge 0}$.
We do not explore a weighted loss further,
but this may be useful as we can set
$W_{uv} = 1$ only if we observed a trail
between $u$ and $v$ and $W_{uv} = 0$,
otherwise. Another choice is to set
$W_{uv}$ inverse proportional to the
sample variance, similar
to generalized least squares.

Even though our
loss is non-convex,
we are able to minimize $\ell_2(L^+)$
reliably, which we show
empirically in Section~\ref{sec:exp}.
In the following, we
show how to  incorporate our
gradient descent approach into
an algorithm for learning mixtures
of Markov chains.
Here, the efficiency of our gradient
calculations % , which we derived in this section,
are key to obtaining a new algorithm
that scales well in the number
of nodes $n$, which was
challenging for previous
works~\cite{gupta2016mixtures,spaeh2024www}.

\subsection{\ULTRAMC : Learning a Mixture of Markov Chains from Trails}
\label{sec:prop-mixture}

To learn mixtures,
we follow an expectation-maximization (EM) style
approach. 
Here, we critically use that our algorithm
for learning Markov chains from hitting
times is robust against noise. Noise naturally
arises during expectation-maximization from
inaccurate soft clusterings. For instance,
the initial soft clustering is completely random.
We detail our method in Algorithm~\ref{alg:ht-em}
for the discrete-time setting, but this trivially
extends to continuous-time case.
\begin{algorithm}[h]
   \caption{\ULTRAMC for learning mixtures of Markov chains} % : An EM-style approach to learn mixtures of Markov chains via HT-Inference}
   \label{alg:ht-em}
\begin{algorithmic}[1]
   \STATE {\bfseries Input:} Set of trails $\mathbf X$, number of chains $C$
   \STATE {\bfseries Output:} Mixture $\mathcal M = (M^1, \dots, M^C)$
   \STATE Initialize $\mathcal M = (M^1, \dots, M^C)$ randomly
   \WHILE{$\mathcal M$ has not converged}
        \FOR{$i = 1, \dots, C$}
           \STATE Let $p_{i,\mathbf x} \gets
                \frac{\Pr[x \mid M^i]}{\sum_{j} \Pr[x \mid M^{j}]}$
                for each $\mathbf x \in \mathbf X$
           \STATE Estimate $\estH^i$ from $\mathbf X$ using weights $p_{i,\mathbf x}$
           \STATE Learn $M^i$ from $\estH^i$ through projected gradient descent
        \ENDFOR
   \ENDWHILE
\end{algorithmic}
\end{algorithm}

% It's important to note that the continuous case adheres to the same conceptual framework, albeit with some variations in notation. 

In our approach \ULTRAMC,
we iteratively refine a guess
to the true mixture
$\mathcal M = (M^1, \dots, M^C)$.
In each iteration, we estimate the
likelihood of each trail
$\mathbf x \in \mathbf X$ to
belong to a chain $M^i$ for $i \in [C]$.
Specific to our approach is
that we use these likelihoods
as weights to estimate the
hitting time matrix $\estH^i$ for each
chain $i \in [C]$.
Finally, we use these hitting
times to recompute
all Markov chains $M^i$
using the gradient descent
approach of
Section~\ref{sec:prop-single}.
Our exact approach to estimate
hitting times is detailed
in Algorithm~\ref{alg:ht-est}
in Appendix~\ref{sec:apx-omitted},
along a discussion on bias and
variance of the estimation.

\section{Experimental results} 
\label{sec:exp}
In this section, we conduct an empirical assessment of the algorithms introduced in this study. We demonstrate improved accuracy of our method for learning a single Markov chain (Section~\ref{sec:prop-single}) compared to the prior work.
We also illustrate that \ULTRAMC (Section~\ref{sec:prop-mixture}) is able to meet or surpass the performance of existing algorithms for learning mixtures of Markov chains both in terms of efficiency and accuracy. It is worth mentioning that developing more efficient methods is an open research direction; our current approach is practical for constant values of C and up to 1000 states, with a runtime of a few hours.

% \vspace{-3mm}
\subsection{Experimental setup} 

\spara{Synthetic datasets} We use four graph types: the complete graph $K_n$, the star graph $S_n$, the lollipop graph ({\small \textsf{LOL}$_n$}), and the square grid graph
$G_{\sqrt{n},\sqrt{n}}$. In the lollipop graph, 
there is a complete graph $K_{n/2}$ connected to a path $P_{n/2}$.
We choose $n=16$ % , a relatively small figure,
which is sufficient to demonstrate challenges in our settings.
The four graphs exhibit different hitting time distributions
and the largest hitting time between any pair of nodes
(known as the cover time)
also varies significantly \cite{sericola2023hitting,rao2013finding};
notably, the lollipop graph is known to have the
maximum cover time among all
unweighted, undirected graphs~\cite{brightwell1990maximum,feige1995tight}.
We obtain a Markov chain from these
graphs by choosing the next state
uniformly among the neighborhood
of the current state.
We also generate uniform random Markov chains.
In the discrete-time setting, every possible transition is
allowed and its weight is chosen uniformly
at random from $[0,1]$.
These uniform weights are then normalized to form a valid stochastic
transition matrix $M$. 
In the continuous setting, we generate random rate matrices. Here, we
choose the rate $K_{uv}$ uniformly and independently at random
from the interval $[0, 1]$ for each pair of states $u\not= v$.
To obtain mixtures,
we combine multiple
such random Markov chains
as done in the prior work
\cite{gupta2016mixtures, spaeh2024www}

% \paragraph{Instances}
% We create instances for learning
% single Markov chains and mixtures,
% and we vary the number of states $n$
% and number of chains $L$.
% We generate individual Markov chains
% in a number of ways to showcase
% the differing behavior of our approach.
% \begin{itemize}
%     \item  We generate random uniform
%         which we sample uniformly from the
%         set of all Markov chains on
%         $n$ states.

%     \item We use a lollipop graph which
%         is a complete graph $K_{n/2}$ connected
%         to a path $P_{n/2}$. We obtain a Markov
%         chain from the graph where we visit
%         each neighbor with equal probability.
%         Lollipop graphs obtain a hitting
%         time $\Omega(n^3)$ which
%         is maximum among all Markov chains on $n$ states.

%     \item Similarly, we use a grid graph and the
%         star graph.
% \end{itemize}
% \fabian{what is uniform for continuous time?}

% For real-world instances,
% we replicate the
% experimental setup of \cite{spaeh2024www}
% on  NBA dataset (Markovletics)
% and of \cite{spaeh2023casvd}
% on an MSNBC dataset.

\spara{NBA dataset} 
We use a dataset from Second Spectrum~\cite{secondspectrum} on the
passing game in the NBA from which
we generate continuous-time passing sequences for each team. In these sequences, every player is represented as a state, with state transitions occurring whenever the ball is passed between players. These sequences effectively capture offensive plays, concluding when the ball is taken over by the opposing team or a shot attempt is made. To denote the outcome of each offensive play, we use two additional states, $\hit$ and $\miss$ depending on whether the team scored or not. This dataset encompasses the 2022 and 2023 NBA seasons and includes a total of 1\,433\,788 passes. These passes are part of 535\,351 offensive opportunities from 2460 games. We only consider trails with a duration between 10 and 20 seconds. We generate an average of 3850 sequences per team. 

\spara{Methods}
To estimate a single Markov chain
from either true or estimated hitting times,
we use the efficient gradient descent
approach of Section~\ref{sec:prop-single}, which
we also refer to as \ULTRAMC.
In our implementation, we use
the ADAM optimizer \cite{kingma2016}
with parameters $\beta_1=0.99$
and $\beta_2=0.999$
and a learning rate of $\eta=10^{-4}$.
We start either with a random Markov chain,
or we use the output of \citet{wittmann2009reconstruction}
projected onto the feasible set.
We denote the latter as \textsf{WSBT}.
We estimate hitting times
via Algorithm~\ref{alg:ht-est}
in Appendix~\ref{sec:apx-exp}.
% \fabian{compare with McGibbons?}
To learn a mixture of Markov chains,
we use \ULTRAMC as
introduced in Section~\ref{sec:prop-mixture}
limited to 100 iterations.
Our algorithms work for discrete
and continuous-time Markov chains.
We consider baselines
from prior works:
For discrete-time Markov chains,
we use a basic expectation maximization
{\small \textsf{EM (discrete)}}
and the SVD-based approach
{\small \textsf{SVD (discrete)}} \citet{gupta2016mixtures}.
{\small \textsf{SVD (discrete)}} works on
trails of length three, which
we can obtain from our instances
by subdividing each trail.
For continuous-time Markov chains,
we use methods detailed in
\cite{spaeh2024www}.
These are the discretization-based
methods 
{\small \textsf{SVD (discretized)}},
{\small \textsf{EM (discretized)}},
{\small \textsf{KTT (discretized)}}~\cite{kausik2023mdps},
where we use $\tau=0.1$ for the discretization rate,
and continuous-time expectation maximization
{\small \textsf{EM (continuous)}}.
% which currently stand as the state-of-the-art competitor.
We stop all EM-based methods after
100 iterations or convergence.

\spara{Metrics}
To compare a learned mixture to the
ground truth,
we use the
recovery error \cite{gupta2016mixtures}
which is defined via the total variation
(TV) distance. For discrete-time,
we can compute
the TV distance through
$
    \TV(M_u, M'_u) =
        \frac 1 2 \sum_{v=1}^n | M_{uv} - M'_{uv}|
$
and we use a similar formulation
for continuous-time
\cite{spaeh2024www}.
We also use this expression when the
output is not a stochastic matrix, which
may be the case in the method
of \citet{wittmann2009reconstruction} under noise.
For two discrete or continuous-time
Markov chains $M$ and $M'$,
the recovery error is the average
TV-distance
\[
    \textrm{recovery-error}(M, M') = \frac 1 n \sum_{u=1}^n \TV(M_u, M'_u) .
\]
Finally, for a mixture of $C$ Markov
chains, the recovery error is defined
as the average recovery error of the
best assignment, i.e.
\[
    \textrm{recovery-error}(\mathcal M, \mathcal M')\\ =
    \frac 1 C \min_{\sigma \in S_C} \sum_{i = 1}^C
        \textrm{recovery-error}(M^i, (M')^{\sigma(i)})
\]
where $S_C$ is the group of all
permutations on $[C]$.
We compare true and estimated hitting
times with the Frobenius
norm $\| \estH - \trueH \|_F$ defined as
$
    \| \estH - \trueH \|_F^2 =
    \sum_{u, v} (\estH_{uv} - \trueH_{uv})^2
$.

\spara{Code} Our Python and Julia code is available online.\footnote{\url{https://github.com/285714/HTInference}}
We executed our code on a 2.9 GHz Intel Xeon Gold 6226R processor using 384GB RAM.

\subsection{Experimental Findings}
\label{sec:findings}

\begin{figure}[t]
    \centering
    
    \def\fact{0.25}
    \def\negspace{-1em}
    \hspace{-2em}~
    \includegraphics[width=\fact\textwidth]{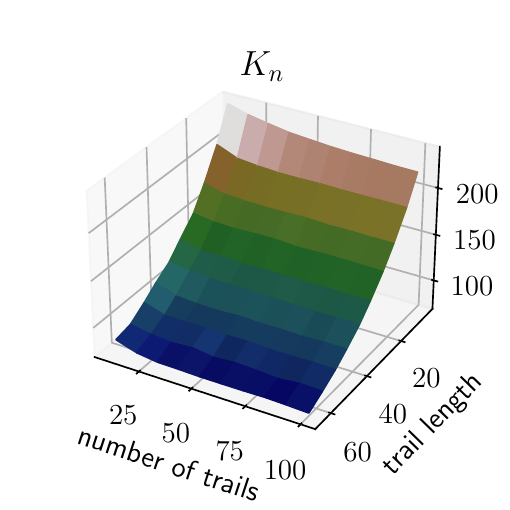}~
    \hspace{\negspace}~
    \includegraphics[width=\fact\textwidth]{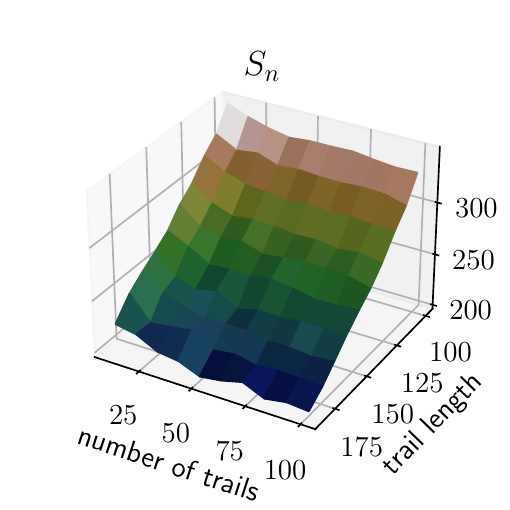}~
    \hspace{\negspace}~
    \includegraphics[width=\fact\textwidth]{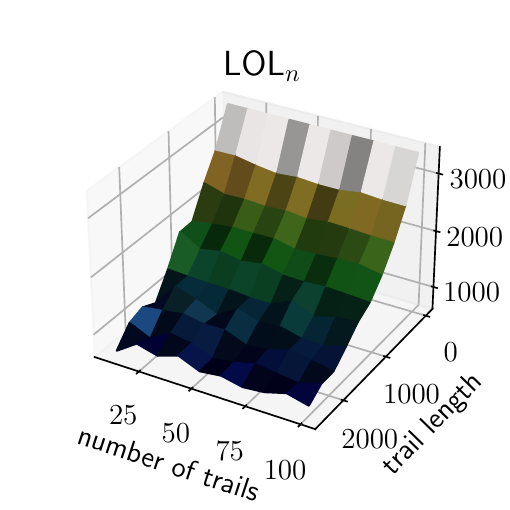}~
    \hspace{\negspace}~
    \includegraphics[width=\fact\textwidth]{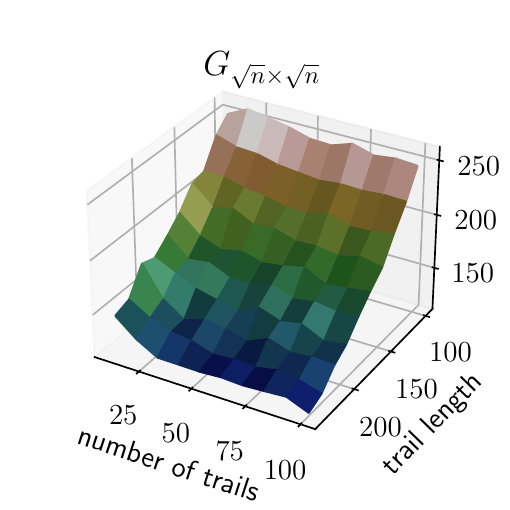}~
    
    \caption{We measure the error $\| \trueH - \estH \|_F$ in the estimation of hitting
    times from trails for $n=16$ nodes on the complete graph ($K_n$), star graph ($S_n$),
    lollipop graph ({\scriptsize \textsf{LOL}}), and grid graph ({\scriptsize \textsf{GRID}}).
    We show the mean over $5$ runs.
    The cover time for the graphs are $16$, $46$, $612$, and $\approx 59.4$ respectively
    and we vary the trail length to a multiple of the cover time for each graph.
    }
    \label{fig:ht-est}
\end{figure}

\iffalse
\begin{table}[t]
\caption{Running times of different gradient implementations,
over 1000 gradient descent iterations.}
\label{tab:runtime}
\begin{center}
\begin{small}
\begin{tabular}{rccc}
\toprule
$n$ & Analytical & Autodiff & Numerical \\
\midrule
$5$ &
$0.03\mathrm s \, \pm 0.01$ &
$0.40\mathrm s \, \pm 0.02$ &
$2.64\mathrm s \, \pm 0.10$ \\
$10$ &
$0.28\mathrm s \, \pm 0.01$ &
$100.40\mathrm s \, \pm 1.78$ &
$164.67\mathrm s\, \pm 1.80$ \\
$50$ &
$1.41\mathrm s \, \pm 0.06$ &
$>2\mathrm h$ &
$>2\mathrm h$ \\
$1000$ &
$10.07\mathrm{m} \, \pm 0.19$ &
$>2\mathrm h$ &
$>2\mathrm h$ \\
$2000$ &
$76.33\mathrm{m} \, \pm 0.18$ &
$>2\mathrm h$ &
$>2\mathrm h$ \\
\bottomrule
\end{tabular}
\end{small}
\end{center}
\end{table}
\fi

\begin{table}[t]
\caption{Running times of different gradient implementations,
over 1000 gradient descent iterations.}
\label{tab:runtime}
\begin{center}
\begin{small}
\begin{tabular}{r|ccccc}
\toprule

$n$ &
$5$ &
$10$ &
$50$ &
$1000$ &
$2000$ \\
\midrule

Analytical &
$0.03\mathrm s \, \pm 0.01$ &
$\phantom{10}0.28\mathrm s \, \pm 0.01$ &
$1.41\mathrm s \, \pm 0.06$ &
$10.07\mathrm{m} \, \pm 0.19$ &
$76.33\mathrm{m} \, \pm 0.18$ \\

Autodiff &
$0.40\mathrm s \, \pm 0.02$ &
$100.40\mathrm s \, \pm 1.78$ &
$>2\mathrm h$ &
$>2\mathrm h$ &
$>2\mathrm h$ \\

Numerical &
$2.64\mathrm s \, \pm 0.10$ &
$164.67\mathrm s\, \pm 1.80$ &
$>2\mathrm h$ &
$>2\mathrm h$ &
$>2\mathrm h$ \\

\bottomrule
\end{tabular}
\end{small}
\end{center}
\end{table}

\begin{figure}
    \centering
    \includegraphics[width=0.5\columnwidth]{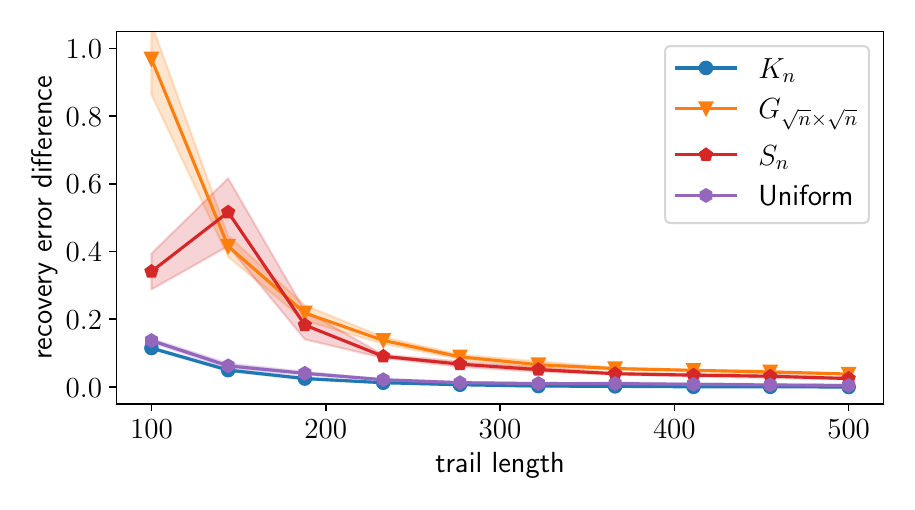}~
    \includegraphics[width=0.5\columnwidth]{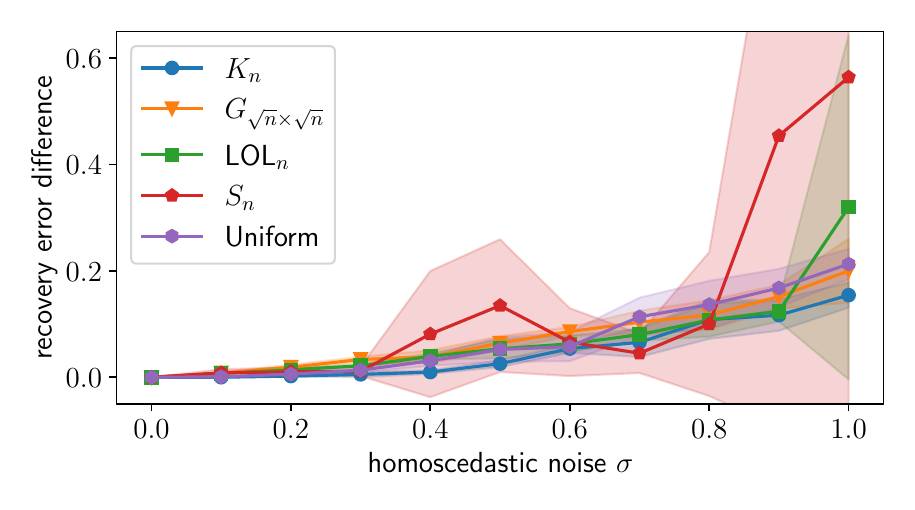}
    \vspace{-2em}
    \caption{Learning Markov chains from noisy hitting times, for different
    graph types and noise levels.  We plot the difference between the recovery error achieved by \citet{wittmann2009reconstruction} and our method. The difference is consistently positive, indicating an improvement for ULTRA-MC.   On the left, the improvement for the lollipop graph
    {\scriptsize \textsf{LOL}$_n$} exceeds $50$, so
    we omit it from the plot.}
    \label{fig:ht-inf-noise}
\end{figure}

\begin{figure}
    \centering
    \includegraphics[width=0.5\columnwidth]{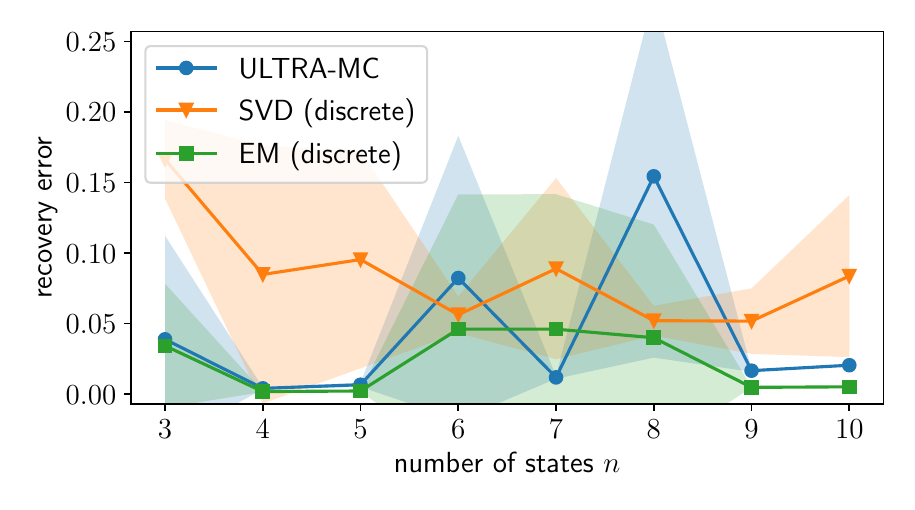}~
    \includegraphics[width=0.5\columnwidth]{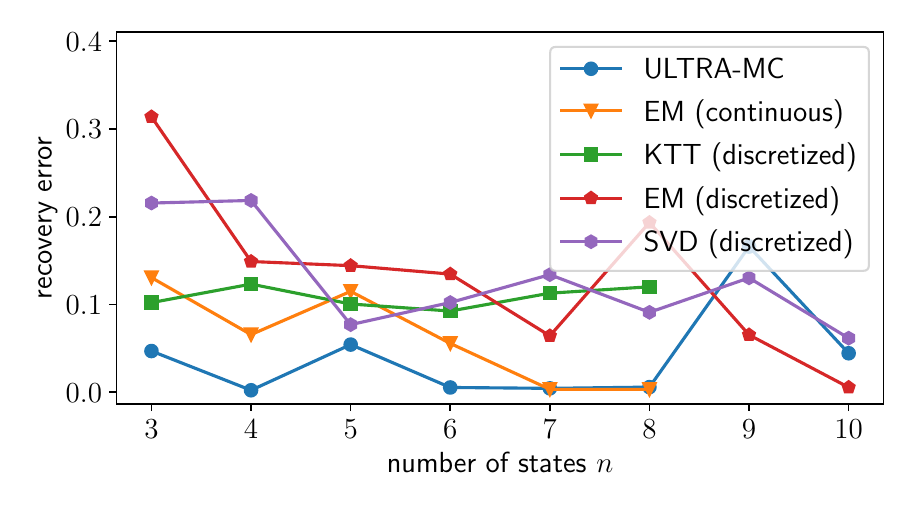}
    \vspace{-2em}
    \caption{Learning a discrete-time (left) and continuous time (right) random mixture of $C=2$ chains from trails. We report average and standard deviation. Missing points indicate a timeout at 2 hours. For readability, we report the standard deviation separately in Table~\ref{tab:std-ht-mix} of Appendix~\ref{sec:apx-exp}}
    \label{fig:ht-mix}
\end{figure}

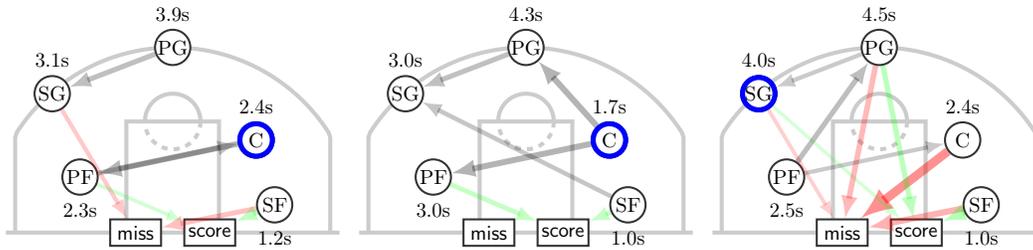
\begin{figure}
    \centering

\tikzstyle{court}=[black!20, line width=2pt,samples=100]
\tikzstyle{player}=[circle, inner sep=0pt, text width=16pt, align=center, draw=black!80, line width=1pt, fill=white]
\tikzstyle{pass}=[-latex, black]
\tikzstyle{basket}=[player, rectangle, inner sep=2pt, minimum height=13pt, text centered, text width=21pt]
\tikzstyle{start}=[player,fill=none,draw=blue,line width=2.5pt]

\scalebox{0.77}{
\begin{tikzpicture}[scale=1.6]

    \draw[court, domain=0:180] plot ({min(1.7, max(-1.7, 2*cos(\x)))}, {2*sin(\x)});
    \draw[court] (-0.5,0) -- (-0.5,1.2) -- (0.5,1.2) -- (0.5,0);
    \draw[court, domain=0:180] plot ({0.3*cos(\x)}, {1.2 + 0.3*sin(\x)});
    \draw[court, dashed, domain=0:180] plot ({0.3*cos(\x)}, {1.2 - 0.3*sin(\x)});
    \draw[court] (-1.8,0) -- (1.8,0);

    \node[player] (PG) at (0,2) {PG};
    \node[player] (SG) at (-1.3,1.5) {SG};
    \node[player] (PF) at (-1.0,0.6) {PF};
    \node[player] (C) at (0.9,1.0) {C};
    \node[player] (SF) at (1.1,0.3) {SF};

    \node[basket] (miss) at (-0.4,0) {$\mathsf{miss}$};
    \node[basket] (score) at (0.4,0) {$\mathsf{score}$};

    \node[label={[label distance=6pt]above:{2.4s}}] at (C) {};
    \node[start] at (C) {};
    \draw[pass,opacity=0.28776581810659885,line width=2.8776581810659883pt] (C) to (PF);
    \node[label={[label distance=6pt]above:{3.1s}}] at (SG) {};
    \draw[pass,opacity=0.21285917911926375,line width=2.1285917911926373pt,red] (SG) to (miss);
    \node[label={[label distance=6pt]above:{3.9s}}] at (PG) {};
    \draw[pass,opacity=0.25427920751148286,line width=2.5427920751148285pt] (PG) to (SG);
    \node[label={[label distance=6pt]below:{2.3s}}] at (PF) {};
    \draw[pass,opacity=0.18244518916453703,line width=1.8244518916453702pt,green] (PF) to (score);
    \draw[pass,opacity=0.23548126658403767,line width=2.3548126658403765pt] (PF) to (C);
    \node[label={[label distance=6pt]below:{1.2s}}] at (SF) {};
    \draw[pass,opacity=0.24770406016131077,line width=2.477040601613108pt,red] (SF) to (miss);
    \draw[pass,opacity=0.24724753796959578,line width=2.472475379695958pt,green] (SF) to (score);

\end{tikzpicture}}
\scalebox{0.77}{
\begin{tikzpicture}[scale=1.6]

    \draw[court, domain=0:180] plot ({min(1.7, max(-1.7, 2*cos(\x)))}, {2*sin(\x)});
    \draw[court] (-0.5,0) -- (-0.5,1.2) -- (0.5,1.2) -- (0.5,0);
    \draw[court, domain=0:180] plot ({0.3*cos(\x)}, {1.2 + 0.3*sin(\x)});
    \draw[court, dashed, domain=0:180] plot ({0.3*cos(\x)}, {1.2 - 0.3*sin(\x)});
    \draw[court] (-1.8,0) -- (1.8,0);

    \node[player] (PG) at (0,2) {PG};
    \node[player] (SG) at (-1.3,1.5) {SG};
    \node[player] (PF) at (-1.0,0.6) {PF};
    \node[player] (C) at (0.9,1.0) {C};
    \node[player] (SF) at (1.1,0.3) {SF};

    \node[basket] (miss) at (-0.4,0) {$\mathsf{miss}$};
    \node[basket] (score) at (0.4,0) {$\mathsf{score}$};

    \node[label={[label distance=6pt]above:{1.7s}}] at (C) {};
    \node[start] at (C) {};
    \draw[pass,opacity=0.29487541693535085,line width=2.9487541693535086pt] (C) to (PG);
    \draw[pass,opacity=0.2949478438375187,line width=2.949478438375187pt] (C) to (PF);
    \node[label={[label distance=6pt]above:{3.0s}}] at (SG) {};
    \node[label={[label distance=6pt]above:{4.3s}}] at (PG) {};
    \draw[pass,opacity=0.250650591089934,line width=2.5065059108993397pt] (PG) to (SG);
    \node[label={[label distance=6pt]below:{3.0s}}] at (PF) {};
    \draw[pass,opacity=0.23231171750782495,line width=2.3231171750782496pt,green] (PF) to (score);
    \node[label={[label distance=6pt]below:{1.0s}}] at (SF) {};
    \draw[pass,opacity=0.22755443550511134,line width=2.2755443550511134pt,green] (SF) to (score);
    \draw[pass,opacity=0.22521480260172205,line width=2.2521480260172204pt] (SF) to (SG);

\end{tikzpicture}}
\scalebox{0.77}{
\begin{tikzpicture}[scale=1.6]

    \draw[court, domain=0:180] plot ({min(1.7, max(-1.7, 2*cos(\x)))}, {2*sin(\x)});
    \draw[court] (-0.5,0) -- (-0.5,1.2) -- (0.5,1.2) -- (0.5,0);
    \draw[court, domain=0:180] plot ({0.3*cos(\x)}, {1.2 + 0.3*sin(\x)});
    \draw[court, dashed, domain=0:180] plot ({0.3*cos(\x)}, {1.2 - 0.3*sin(\x)});
    \draw[court] (-1.8,0) -- (1.8,0);

    \node[player] (PG) at (0,2) {PG};
    \node[player] (SG) at (-1.3,1.5) {SG};
    \node[player] (PF) at (-1.0,0.6) {PF};
    \node[player] (C) at (0.9,1.0) {C};
    \node[player] (SF) at (1.1,0.3) {SF};

    \node[basket] (miss) at (-0.4,0) {$\mathsf{miss}$};
    \node[basket] (score) at (0.4,0) {$\mathsf{score}$};

    \node[label={[label distance=6pt]above:{2.4s}}] at (C) {};
    \draw[pass,opacity=0.4355642865772146,line width=4.355642865772146pt,red] (C) to (miss);
    \node[label={[label distance=6pt]above:{4.0s}}] at (SG) {};
    \node[start] at (SG) {};
    \draw[pass,opacity=0.18847035461070993,line width=1.8847035461070993pt,red] (SG) to (miss);
    \draw[pass,opacity=0.16649656039718758,line width=1.6649656039718757pt,green] (SG) to (score);
    \node[label={[label distance=6pt]above:{4.5s}}] at (PG) {};
    \draw[pass,opacity=0.275283053926425,line width=2.75283053926425pt,red] (PG) to (miss);
    \draw[pass,opacity=0.2655546602757131,line width=2.655546602757131pt,green] (PG) to (score);
    \draw[pass,opacity=0.21152264889210112,line width=2.115226488921011pt] (PG) to (SG);
    \node[label={[label distance=6pt]below:{2.5s}}] at (PF) {};
    \draw[pass,opacity=0.2094435744217447,line width=2.094435744217447pt] (PF) to (C);
    \draw[pass,opacity=0.2561104659893206,line width=2.5611046598932057pt] (PF) to (PG);
    \node[label={[label distance=6pt]below:{1.0s}}] at (SF) {};
    \draw[pass,opacity=0.30516502468504336,line width=3.051650246850434pt,red] (SF) to (miss);
    \draw[pass,opacity=0.28519177390091816,line width=2.8519177390091817pt,green] (SF) to (score);

\end{tikzpicture}}
% \vspace{-2mm}
\caption{Three out of $C=6$ strategies of the Denver Nuggets. The remaining
ones are in Figure~\ref{fig:nba}.
We mark the six positions in a basketball game: Point Guard (PG), Shooting Guard (SG), Power Forward (PF), Center (C), and Small Forward (SF). Each position is annoted with the average ball holding time. Arrow thickness and opacity reflect the probability of a pass, and
we omit passes that occur with probability less than 0.2
for clarity.
The player most likely to start a passing game is highlighted in blue. Attempted shots are indicated in red (miss) and green (score).}
\label{fig:nuggets}
\end{figure}

\spara{Estimating Hitting Times}
To establish a baseline for our method, we need
to understand how accurately we can deduce hitting
times from a specific number of trails of certain length.
We use the four graph types $K_n$, $S_n$,
{\small \textsf{LOL}$_n$}, and $G_{\sqrt{n}\times \sqrt{n}}$ to
assess the basic estimation of hitting times
(cf. Algorithm~\ref{alg:ht-est}).
In Figure~\ref{fig:ht-est}, we
vary the number of trails and their lengths and quantify
the error between the true hitting times
matrix $\trueH$ and estimations $\estH$
via the Frobenius norm.
We find that $K_n$ is easiest in terms of sampling,
as all hitting times are identical and linear in $n$.
In contrast, the star graph $S_n$ demands longer trails
to achieve comparable accuracy, owing to the asymmetry
in its hitting times. A grid displays similar behavior.
Conversely, for the lollipop graph, much longer trails
are necessary to observe an acceptable estimation error.
Overall, we observe that the estimation accuracy
depends on the underlying Markov chain and the
magnitude and skew of the hitting times. The
number of sampled trails and their length needs to be
chosen accordingly.

\spara{Learning a Single Markov Chain}
To understand how robust our learning
algorithm is under noise, especially compared
to the method of \citet{wittmann2009reconstruction},
we apply both methods to noisy hitting times.
We either estimating the hitting times
from trails or add homoscedastic Gaussian noise.
Specifically,
we set $\estH_{uv} = \trueH_{uv} + N(0, \sigma^2)$
for each $u \not= v$ independently.
In Figure~\ref{fig:ht-inf-noise},
we report the improvement in recovery error
over the method
of \citet{wittmann2009reconstruction},
% (and $\ell_2$-loss for our method)
as a function of the length of the sampled
trails and the standard deviation $\sigma$.
We consistently improve,
particularly when
the noise stems from estimating
hitting times from trails.  
Figures~\ref{fig:ht-inf-from-trails} and
\ref{fig:ht-inf-noise-hetero} in
Appendix~\ref{sec:apx-exp} show the
recovery error for both approaches, and
also with heteroscedastic noise.

We also study the scalability of our method.
Table~\ref{tab:runtime} shows
execution times for various gradient implementations,
applied to a single random discrete-time Markov chain.
We compare analytical gradients utilizing the formulae detailed in Section~\ref{sec:proposed}, and the methods of automatic differentiation (Autodiff) and numerical analysis (forward difference method).
We see that our exact analytical derivatives are crucial
for scalability as the other two methods are not even scalable
to $\approx 50$ nodes.
Furthermore, Figure~\ref{fig:ht-inf} of Appendix~\ref{sec:apx-exp}
shows the convergence behavior.
On the left, we display recovery error
and loss for a fixed number of $10\,000$ gradient
descent iterations using the true hitting
times $\trueH$.
We see that $10\,000$ iterations suffice
for a relatively large number of states $n$, but more
iterations are necessary when $n > 500$.
On the right, we show
the convergence of our method from a random
initialization. We see that within only a
few thousand iterations, we reach and
then improve the recovery error obtained by
\citet{wittmann2009reconstruction}.

\spara{Learning Mixtures}
In Figure~\ref{fig:ht-mix},
we conduct experiments using $C=2$ chains for the discrete and continuous-time settings. In discrete-time, we learn with 1000 trails
and for the more challenging continuous-time setting, we increase this to 5000 trails.
Trails are of length 1000 in both scenarios. Notably, in the continuous-time setting, {\small \textsf{EM (continuous)}} and {\small \textsf{KTT (discretized)}} exceeded a duration of 2 hours, leading to a timeout.
Our findings indicate that \ULTRAMC is able to match the performance in discrete-time and surpasses the performance in continuous-time of other competing approaches, demonstrating improved effectiveness and scalability.
We evaluate our method on \emph{Markovletics} \cite{spaeh2024www} which unmixes offensive strategies from the passing game in the NBA.
We showcase six strategies of the Denver Nuggets
in Figures~\ref{fig:nba} and \ref{fig:nuggets}.
We also include a mixture of offensive strategies of the Boston Celtics in Appendix~\ref{sec:apx-exp}.
Although the assumption that the passing game follows a stochastic Markov process is simplistic, it still gives us direct insights about the passing game.
% We leave the choice of an optimal value for $C$ as an intriguing and unresolved problem. 

%, particularly when additional inside information is available.
% such as constraints like the total offense time not exceeding 24 seconds.

\begin{figure}
    \centering
    \includegraphics[trim=20mm 0mm 20mm 0mm,clip,width=0.4\textwidth]{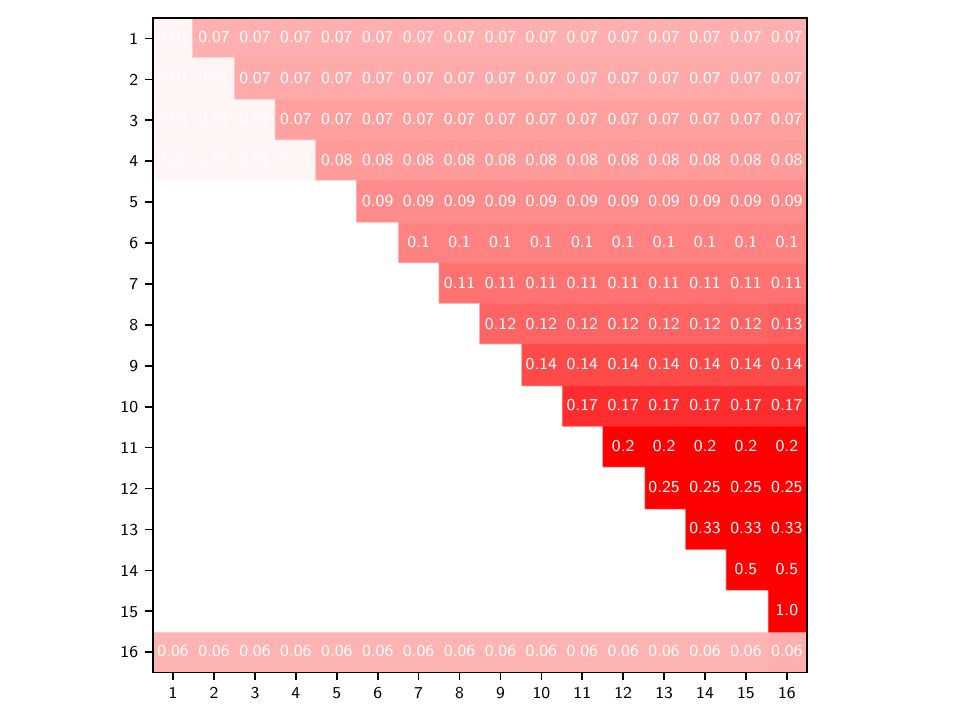}
    \hspace{1em}
    \includegraphics[trim=20mm 0mm 20mm 0mm,clip,width=0.4\textwidth]{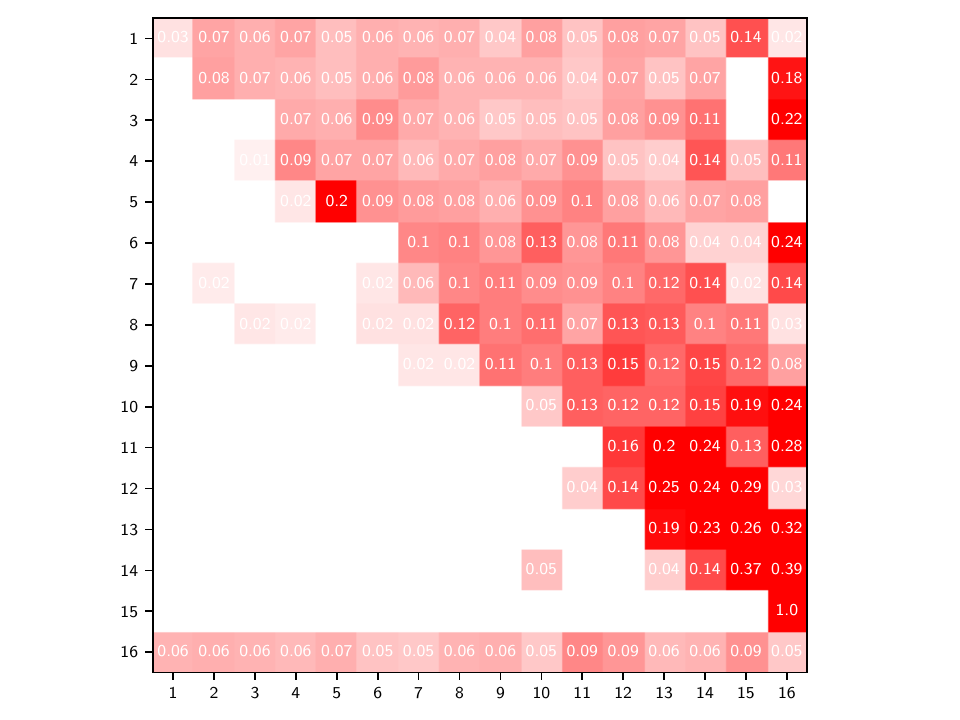}
    \caption{Learning a Markov chain on a complete DAG on $n=16$ nodes. The tables represent the
        transition matrices learned from the true hitting times (left) and noisy hitting
        times with $\sigma=0.3$ (right). Note that the true Markov chain has non-zero transition probabilities
        only for all $1 \le u < v \le n$ where $M_{uv} = \frac 1 {n-u}$ and $M_{16,16} = 1$.}
    \label{fig:dag}
\end{figure}

Figure~\ref{fig:boston} shows the $C=6$ learned offensive strategies from Boston Celtics' passing game during the 2022 season. The format of the plots shows the same format explain in Section~\ref{sec:exp}.

\spara{Violating Irreducibility}
In theory, our method requires aperiodic and irreducible chains in order to obtain Equation~\ref{eq:1}. Yet, Markov chains observed in practice may violate these conditions, such as cancer progression networks~\cite{beerenwinkel2005mtreemix,tsourakakis2013modeling,desper1999inferring}.
To confirm that our approach is still practically applicable to
such instances,
we create trails from a directed acyclic graph (DAG) by adding every edge $(u,v)$ for $1 \le u<v\leq n$. For all pairs $u,v$, we calculate the estimated hitting times and assign a high value to the hitting time $H(u,v)$ whenever $u>v$. By setting the infinite hitting times to a value substantially greater than any observed hitting times, we effectively render the chain irreducible. This approach allows us to confirm that our algorithm can accurately learn the DAG structure by merely filtering out transitions with low probabilities.  

We learn a Markov chain by replacing infinite hitting
times with the value $100$, which is sufficiently large.
As this introduces new, previously impossible
transitions into the chain, we only retain transitions with sufficiently
large mass.
Specifically, if the transition probability from state $u$ to $v$
has probability $M_{uv} \le 0.1 \cdot \max_{w} M_{uw}$, we remove it from
the chain.
The results in Figure~\ref{fig:dag} 
indicate that in practical scenarios, we can circumvent the limitations of using our tools on irreducible chains by treating them as if they were irreducible, through assigning high values to the absent hitting times. A thorough investigation of this strategy merits its own dedicated study, which we aim to pursue as part of future research into understanding cancer progression through the use of mixed Markov chains~\cite{desper1999inferring,beerenwinkel2005mtreemix}.

 \vspace{-3mm}
\section{Conclusion}
In this work, we introduce \ULTRAMC, a pioneering algorithm designed to efficiently learn either a single Markov chain or a mixture of Markov chains from a subset of potentially noisy hitting times. This is the first algorithm of its kind. Compelling questions to explore involve the development of more efficient and precise algorithms for learning mixtures.  the design of improved estimators for deducing hitting times from trails, and better selection of the number of chains $C$.

\bibliography{ref}

\newpage
\appendix

\section{Omitted Algorithms and Proofs}
\label{sec:apx-omitted}

\subsection{Estimating Hitting Times}

In Algorithm~\ref{alg:ht-est},
we describe the estimation of
hitting times from a set of
trails $\mathbf X$.
We state the algorithm
in the discrete-time setting,
but it naturally extends to   the 
continuous-time setting as well. 
The algorithm is
an efficient
implementation of the following idea:
For each trail
$\mathbf x \in \mathbf X$,
each index $t \in \N$ with state $\mathbf x_t = u$,
and each other state $v \not= u$,
we record the time
$\min \{ t' \ge t : \mathbf x_{t'} = v \}$
as a sample for the hitting time
from $u$ to $v$. We then output
the average over all these samples.

It is important to acknowledge that the method for estimating hitting times as outlined in Algorithm~\ref{alg:ht-est} carries a bias. This can be illustrated with a simple Markov chain example, containing two nodes, $u$ and $v$, where $v$ can be reached from $u$ through either a short or a long path. In scenarios where only the shorter path's trajectories are observed due to the limited length of trails, and none from the longer path, the estimation might skew towards the shorter paths. In the case of a singular chain, this issue is manageable by linking paths together to simulate the longer journey, thus allowing for an unbiased estimation of hitting times. However, this strategy falls short in scenarios involving multiple chains because it relies on potentially inaccurate clustering, preventing seamless path concatenation without introducing further errors, as the original chain from which a trail is derived remains unknown. Consequently, in the context of estimating hitting times using Algorithm~\ref{alg:ht-est}—especially when trails do not adequately cover long paths—the estimates are accepted to be biased. This method, however, avoids the pitfalls of adding errors through trail concatenation and still provides a reliable estimate of hitting times under these constraints. 

In real-world scenarios, it might not always be possible to observe a transition from state $u$ to $v$. How we choose to handle the corresponding estimate $\hat{H}(u,v)$ can vary depending on the situation. In many of our experiments, we treat it as missing data, opting not to include it in our fitting process. In cases where we determine that a transition between these states is impossible, we assign it a significantly high value (e.g., Q6 in Section~\ref{sec:exp}). For different applications, especially those where the observation period is short relative to the time needed for a transition, the development of right-censoring methods is advisable. Interestingly, the challenge of deriving hitting times from trails has yet to be thoroughly explored. The techniques described in \cite{cherapanamjeri2019testing,daskalakis2018testing} may be beneficial in achieving this objective. It should be highlighted that the complexity of the issue decreases substantially when the trails are sufficiently lengthy, specifically of the order $\Omega(n^3)$. This simplification can be demonstrated by employing concentration methods akin to those found in the works of \cite{benjamini2006waiting,spaeh2024www}. It should be noted that although this issue is a classic one, the problem of deducing hitting times from trails of a specific length remains under-explored.

\begin{algorithm}[tb]
   \caption{Estimating Hitting Times in discrete-time}
   \label{alg:ht-est}
\begin{algorithmic}
   \STATE {\bfseries Input:} Set of trails $\mathbf X$
   \STATE {\bfseries Output:} Hitting times $\estH$
   \STATE $T_{uv} \gets 0$
   \STATE $C_{uv} \gets 0$
   \FOR{each trail $\mathbf x \in \mathbf X$}
       \STATE Initialize empty sequence $\texttt P_u$ for each $u \in V$
       \FOR{$t = 1$ to $\mathrm{length}(\mathbf x)$}
           \STATE $u \gets x_t$
           \STATE Append $t$ to $\texttt P_u$
       \ENDFOR
       \STATE Let $i_u \gets 1$ for each $u \in V$
       \FOR{$t = 1$ to $\text{length}(\mathbf x)$}
           \STATE $u \gets x_t$
           \FOR{$v \in V$}
                \IF{$i_v \le \mathrm{length}(\texttt P_v)$}
                   \STATE $T_{uv} \gets T_{uv} + \texttt P_v(i_v) - t$
                   \STATE $C_{uv} \gets C_{uv} + 1$
                \ENDIF
           \ENDFOR
       \ENDFOR
   \ENDFOR
   \STATE $\estH_{uv} \gets T_{uv} / C_{uv}$
\end{algorithmic}
\end{algorithm}

\subsection{Omitted Proofs}

In this section, we give
proofs and lemmas
that we omitted in the main body.
We start by showing the
expression for the hitting times
for discrete-time and continuous-time
Markov chains in Section~\ref{apx:ht}.
Next, we show how to obtain
Lemmas~\ref{lem:gradA} and
\ref{lem:gradB} in Section~\ref{apx:grad}.
The following Sections
\ref{apx:stationary} and
\ref{apx:lap}
contains Lemmas detailing
the Jacobian of the stationary
distribution and the Laplacian,
respectively.

\subsubsection{Hitting Times}
\label{apx:ht}

\begin{lemma}
\label{lem:ht-discrete}
The hitting time from state $u$ to state $v$
for a discrete-time Markov chain with
Laplacian $L$ is given by
\[
    H_{uv} = \left(\ones - \frac 1 {s_v}\ones_v\right)^\top
        L^+ \left(\ones_u - \ones_v\right) .
\]
\end{lemma}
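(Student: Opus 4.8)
The plan is to prove the identity by first-step analysis, reducing all hitting times to a single rank-deficient linear system and then inverting it with $L^+$. Throughout I write $e_u$ for the $u$-th indicator vector (the $\ones_u$ of the statement), and I recall that $L = I - M$ gives $L\ones = \zeros$ (since $M$ is row-stochastic) and $s^\top L = \zeros$ (since $s$ is stationary, $s^\top M = s^\top$), with $s > 0$ and $L$ of rank $n-1$ because the chain is irreducible and aperiodic.

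Fix the target $v$ and let $h \in \R^n$ collect the hitting times to $v$, i.e. $h_u = H_{uv}$ with $h_v = 0$. First-step analysis gives $h_u = 1 + \sum_w M_{uw} h_w$ for every $u \ne v$, which in matrix form reads $L h = \ones - c\, e_v$ for a scalar $c$ absorbing the single equation (at state $v$) that the boundary condition $h_v=0$ removes. The first key step is to pin down $c$ by solvability: since the left null space of $L$ is $\mathrm{span}(s)$, consistency requires the right-hand side to lie in $\mathrm{range}(L) = s^\perp$, so imposing $s^\top(\ones - c\,e_v) = 0$ together with $s^\top \ones = 1$ forces $c = 1/s_v$. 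Hence
\[
    L h = \ones - \frac{1}{s_v} e_v =: b, \qquad b \in \mathrm{range}(L).
\]

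The second key step is to solve this consistent rank-deficient system with the pseudoinverse. Because $\ker L = \mathrm{span}(\ones)$, the general solution is $h = L^+ b + \mu \ones$, and I would fix the free constant $\mu$ from $h_v = 0$, giving $\mu = -e_v^\top L^+ b$. Reading off the $u$-th coordinate then yields
\[
    H_{uv} = h_u = e_u^\top L^+ b + \mu = (e_u - e_v)^\top L^+ \left(\ones - \frac{1}{s_v} e_v\right).
\]
This is the hitting-time expression in its natural form; the statement displays the same contraction of $L^+$ against the vectors $\ones - \tfrac{1}{s_v}e_v$ and $e_u - e_v$.

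The step I expect to be the main obstacle is matching the exact ordering displayed in the statement, because $L^+$ is \emph{not} symmetric, so $(e_u-e_v)^\top L^+(\cdot)$ and $(\cdot)^\top L^+(e_u-e_v)$ cannot be interchanged for free and the orientation must be pinned down, not transposed blindly. I would attack this by evaluating the left-multiplied form explicitly in coordinates, using the orthogonality identity $\ones^\top L^+ = \zeros$ (valid since $\mathrm{range}(L^+) = \mathrm{range}(L^\top) = \ones^\perp$) to collapse the all-ones contribution so that it reduces to $-\tfrac{1}{s_v} e_v^\top L^+(e_u - e_v)$, and comparing this against the value derived above; a two-state chain, where $H_{uv} = 1/M_{uv}$ is known in closed form, gives a quick check that fixes the sign and side conventions. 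Finally, the continuous-time analogue (Lemma~\ref{lem:ht-continuous}) follows verbatim from the same argument after substituting $L = -K$, since the generator obeys the structural facts used throughout, namely $(-K)\ones = \zeros$ and $s^\top(-K) = \zeros$.
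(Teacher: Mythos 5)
Your proof is correct and takes essentially the same route as the paper's: the paper also starts from first-step analysis (its Equation~(5)), fixes the right-hand side $d=\ones-\frac{1}{s_v}\ones_v$ by orthogonality to $s$, solves the singular system $Lx=d$ (solution unique up to multiples of $\ones$), and normalizes by subtracting $x_v\ones$ to enforce $h_v=0$ --- exactly your choice of $\mu=-e_v^\top L^+ d$. Your determination of the constant $c=1/s_v$ via the Fredholm-type solvability condition $s^\top(\ones-c\,e_v)=0$ is a slightly cleaner packaging of what the paper does by constructing a solution to a block system and invoking its uniqueness; the identity $\ones^\top L^+=\zeros$ you invoke is also justified correctly.

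The ``main obstacle'' you flagged deserves comment, because it resolves entirely in your favor: it is an erratum in the paper, not a gap in your argument. The paper's own proof terminates with precisely your form,
\[
    H_{uv} = (\ones_u-\ones_v)^\top L^+\Bigl(\ones-\tfrac{1}{s_v}\ones_v\Bigr),
\]
and the transposed ordering displayed in the lemma statement (and, with yet another sign, in Equation~(1) via $\chi_{vu}$) is \emph{not} equivalent to it, since $L^+$ is not symmetric. Your proposed two-state sanity check settles the orientation: for $M=\begin{pmatrix}1-p & p\\ q & 1-q\end{pmatrix}$ one has $L^+=\frac{1}{2(p^2+q^2)}\begin{pmatrix}p & -q\\ -p & q\end{pmatrix}$, and your form gives $H_{12}=1/p$ as required, whereas the statement's ordering gives $\bigl(\ones-\tfrac{1}{s_2}\ones_2\bigr)^\top L^+(\ones_1-\ones_2)=\frac{(p+q)^2}{2p(p^2+q^2)}$, which equals $1/p$ only when $p=q$. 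So the lemma should be read with the vectors in the order you derived, and no further work is needed on your end; your closing remark on the continuous-time case is likewise sound (the paper instead obtains it by a discretization limit $\tau\to 0$, but your direct argument with $L=-K$ uses only $(-K)\ones=\zeros$ and $s^\top(-K)=\zeros$ and goes through verbatim).
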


For completeness, we repeat the proof due to
\cite{cohen2016}.

\begin{proof}
For two states $u, v \in [n]$ we either have
$H_{uv} = 0$ if $u = v$ or, by the law of total
expectation,
\begin{align}
    \label{eq:5}
    H_{uv} = \sum_{w} (1 + H_{wv}) M_{uw}
    = 1 + (M H)_{uv} .
\end{align}
We now fix $v$ and let $I = V \setminus \{v\}$.
Then, \eqref{eq:5} over all states $u$ is equivalent
to
\[
\begin{pmatrix}L_{I,I} & L_{I,v}\\
0 & 1
\end{pmatrix}\cdot\begin{pmatrix}H_{I,v}\\
H_{v,v}
\end{pmatrix}=\begin{pmatrix}1\\
0
\end{pmatrix} .
\]
We can construct a solution to this system:
Let $d=\ones-\frac{1}{s_{v}}\ones_{v}$
such that $d^{\top}s=0$
Let $x$ be any solution to $L x=d$ and
note that $x$ is unique up to adding multiples of $\ones$.
Therefore,
$y=x-x_{v} \ones$ is a solution with
$Ly=d$ and
$y_{v}=\zeros$ and $y$ therefore
satisfies the above linear system. Due
to uniqueness, $H_{u,v}=y_u$ and
\[
H_{uv}=y_{u}=\ones_{u}^{\top}\underbrace{L^{+}d}_{=x}-\underbrace{\ones_{v}^{\top}L^{+}d}_{=x_{v}}=\left(\ones_{u}-\ones_{v}\right)^{\top}L^{+}\underbrace{\left(\ones-\frac{1}{s_{v}}\ones\right)}_{=d}.
\]
\end{proof}

There is no analog expression for CTMCs. We state such an extension as the following lemma. 

\begin{lemma}
\label{lem:ht-continuous}
The hitting time from state $u$ to state $v$
for a continuous-time Markov chain with
rate matrix $K$ is given by
\[
    H_{uv} = \left(\ones - \frac 1 {s_v}\ones_v\right)^\top
        (-K)^+ \left(\ones_u - \ones_v\right) .
\]
\end{lemma}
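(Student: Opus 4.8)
The plan is to mirror the proof of Lemma~\ref{lem:ht-discrete} and reduce the continuous-time case to it. The entire discrete-time argument rests on two facts about the hitting-time column $H_{\cdot v}$: that $(LH)_{uv} = 1$ for every $u \neq v$, and that $H_{vv} = 0$. Consequently, the only genuinely new work is to establish the \emph{same} linear identity for a CTMC with $L = -K$; once this is in hand, the remainder of the earlier proof (choosing $d = \ones - \frac{1}{s_v}\ones_v$, solving $Lx = d$, and grounding the solution at $v$) transcribes verbatim.

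First I would perform a first-step analysis for the CTMC. Starting from a state $u \neq v$, the sojourn time in $u$ is $\min_{w \neq u} E_w$ with $E_w \sim \Exp(K_{uw})$, which is itself exponential with rate $q_u := \sum_{w \neq u} K_{uw}$ and hence has expectation $1/q_u$; moreover, the identity of the next state $w$ (distributed as $K_{uw}/q_u$) is independent of this sojourn time. By the strong Markov property and the law of total expectation,
\[
    H_{uv} = \frac{1}{q_u} + \sum_{w \neq u} \frac{K_{uw}}{q_u} \, H_{wv}.
\]
Multiplying through by $q_u$ and adopting the generator convention $K_{uu} = -q_u$ (equivalently $K\ones = \zeros$, which is exactly what makes $L = -K$ a Laplacian with $L\ones = \zeros$), the diagonal term cancels and I obtain $(KH)_{uv} = -1$, that is $(LH)_{uv} = 1$ for all $u \neq v$, together with $H_{vv} = 0$. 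This is precisely the linear system \eqref{eq:5} obtained in the discrete-time argument, now with $L = -K$ in place of $L = I - M$.

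Finally I would invoke the irreducibility assumption to close the argument exactly as before. For an irreducible CTMC the stationary distribution $s$ satisfies $s^\top K = \zeros$, hence $s^\top L = \zeros$; thus $L = -K$ has right null space $\spn\{\ones\}$, left null space $\spn\{s\}$, and an invertible grounded submatrix $L_{I,I}$ for $I = [n]\setminus\{v\}$. Consequently $d = \ones - \frac{1}{s_v}\ones_v$ satisfies $s^\top d = 0$ and therefore lies in the range of $L$, so $Lx = d$ is solvable; grounding the solution at $v$ (replacing $x$ by $x - x_v\ones$) produces the unique solution of the hitting-time system and yields $H_{uv} = (\ones - \frac{1}{s_v}\ones_v)^\top (-K)^+(\ones_u - \ones_v)$, as claimed.

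The main obstacle is the second step: correctly justifying the CTMC first-step recurrence---in particular the independence of the exponential sojourn time from the jump target, and the bookkeeping that collapses the diagonal contribution under the zero-row-sum convention---so that it yields the identical identity $(LH)_{uv} = 1$. Everything downstream of that identity is a direct copy of the discrete-time proof, which is why the paper can treat both cases simultaneously through the substitution $L = -K$.
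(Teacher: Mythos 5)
Your proposal is correct, but it takes a genuinely different route from the paper. The paper's proof of Lemma~\ref{lem:ht-continuous} performs no continuous-time first-step analysis at all: it discretizes the chain via $M(\tau) = e^{\tau K}$, observes that the hitting times $H(\tau)$ of this sampled chain obey a $\tau$-scaled copy of the recurrence \eqref{eq:5}, applies the (scaled) conclusion of Lemma~\ref{lem:ht-discrete} to write $H(\tau)_{uv} = \tau\left(\ones - \frac{1}{s_v}\ones_v\right)^\top L(\tau)^+\left(\ones_u - \ones_v\right)$ with $L(\tau) = I - e^{\tau K}$, and then sends $\tau \to 0$, using $\frac{1}{\tau}\left(I - e^{\tau K}\right) \to -K$. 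You instead work directly in continuous time: competing exponentials give the sojourn mean $1/q_u$ and jump law $K_{uw}/q_u$, the strong Markov property gives the recurrence, and the generator convention collapses it to $(LH)_{uv} = 1$ for $u \neq v$ with $L = -K$, after which you reuse the linear-algebra half of the discrete proof verbatim. What each buys: the paper's route recycles Lemma~\ref{lem:ht-discrete} as a black box with zero new probabilistic work, but it quietly relies on two analytic facts it does not justify --- that $H(\tau) \to H$ as $\tau \to 0$ (the sampled chain can miss visits to $v$ that occur between sampling instants), and that the limit may be pulled inside the pseudoinverse (valid here because $\frac{1}{\tau}L(\tau)$ has constant rank $n-1$ for small $\tau$, but this needs saying). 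Your route avoids both subtleties at the price of a short, standard probabilistic argument, and it exposes more directly why $-K$ ``takes the place of the Laplacian'': the two kinds of chains satisfy literally the same grounded linear system. One inherited wrinkle, which affects the paper equally: the grounding argument actually produces $H_{uv} = \left(\ones_u - \ones_v\right)^\top L^+\left(\ones - \frac{1}{s_v}\ones_v\right)$, which matches the stated form $\left(\ones - \frac{1}{s_v}\ones_v\right)^\top L^+\left(\ones_u - \ones_v\right)$ only if $L^+$ may be transposed (e.g., for reversible chains); since you copied the paper's closing step, your proof shares this discrepancy rather than introducing it.
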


\begin{proof}
    We derive the result analogously to
    Lemma~\ref{lem:ht-discrete}.
    Recall that in a continuous-time Markov
    chain the transition probabilities
    are given as the matrix exponential
    $M(\tau) = e^{\tau K}$.
    Let now $H(\tau)$ be the hitting
    times in $M(\tau)$ and note that
    $H = \lim_{\tau \to 0} H(\tau)$.
    In the chain $M(\tau)$, we
    again obtain through
    the law of total expectation that
    \[
        H(\tau)_{uv} = 
        \tau+\sum_{w}H(\tau)_{w,v} M(\tau)
    \]
    for states $u\not=v$.
    Note that this is just a scaled version of
    \eqref{eq:5} in the proof of
    Lemma~\ref{lem:ht-discrete},
    The solution to this system is therefore
    a scaled version of the solution to \eqref{eq:5},
    i.e.
    \[
        H(\tau)_{uv} = \tau \left(\ones - \frac 1 {s_v}\ones_v\right)^\top
            L(\tau)^+ \left(\ones_u - \ones_v\right)
    \]
    where $L(\tau) = I - M(\tau)$.
    Finally, 
    \begin{align*}
        H_{uv}
        &= \lim_{\tau \to 0} H(\tau)_{uv} \\
        &= \lim_{\tau \to 0} \tau \left(\ones - \frac 1 {s_v}\ones_v\right)^\top
            L(\tau)^+ \left(\ones_u - \ones_v\right) \\
        &= \left(\ones - \frac 1 {s_v}\ones_v\right)^\top
            \left(\lim_{\tau \to 0} \frac 1 \tau L(\tau)\right)^+ \left(\ones_u - \ones_v\right) \\
        &= \left(\ones - \frac 1 {s_v}\ones_v\right)^\top
            (-K)^+ \left(\ones_u - \ones_v\right)
    \end{align*}
    as $\frac 1 \tau L(\tau) = 
    \frac 1 \tau (I - e^{\tau K}) \to -K$
    for $\tau \to 0$.
\end{proof}

\subsubsection{Computing the Gradient of the Loss}
\label{apx:grad}

We now complete the derivation of the
efficient gradient expression
from Section~\ref{sec:prop-single}.
We begin by proving Lemma~\ref{lem:stationary}
which we need to compute the stationary
distribution and its derivative.

{
\textbf{Lemma \ref{lem:split}.} \it
Let
$a=\ones^{\top}L^{+}\in\R^{1\times n}$
and set
\begin{align*}
    A &= a^{\top}\ones^{\top}-\ones a \\
    B &= \left(L^{+}-\diag\left(L^{+}\right)\ones^{\top}\right)^{\top}\diag\left(s\right)^{-1} .
\end{align*}
Then, $H = A - B$.
}
\begin{proof}
Pick a pair of states $u, v \in [n]$.
We can re-write Equation \eqref{eq:1} as
\begin{equation}
\label{eq:2}
H_{uv}=\ones^{\top}L^{+}\chi_{vu}-\frac{1}{s_v}\uvec_v^{\top}L^{+}\chi_{vu}.
\end{equation}
The first term evaluates to $\ones^{\top}L^{+}\chi_{vu}=a_u-a_v = A_{uv}.$
% \[
% \ones^{\top}L^{+}\chi_{vu}=a_u-a_v = A_{uv} .
% \]
The second term is
\begin{align*}
\frac{1}{s_v}\uvec_v^{\top}L^{+}\chi_{vu}
&=\frac{1}{s_v}\left(L_{vu}^{+}-L_{vv}^{+}\right) \\
&=\frac{1}{s_v}\left(L^{+}-\diag\left(L^{+}\right)\ones^{\top}\right)_{vu} \\
&=\left(\left(L^{+}-\diag\left(L^{+}\right)\ones^{\top}\right)^{\top}\diag\left(s\right)^{-1}\right)_{uv}
=B_{uv}
\end{align*}
\end{proof}

{
\textbf{Lemma \ref{lem:stationary}.} \it
The stationary distribution for a Markov chain with
Laplacian $L$ is $s=\frac{d}{\| d \|_1}$ where $d=\ones - L L^+ \ones$.
}

\begin{proof}
The stationary distribution is the probability vector $s\in\Delta_{n}$
such that $s_u=\sum_v M_{vu} s_v$ for all $u\in[n]$. The latter
is equivalent to $s^\top = s^\top M$ or $s^\top L= \zeros$.
Note that $L L^+$ is symmetric,
and therefore
\[
    d^\top L = \ones^\top L - \ones^\top (L L^+)^\top L = \ones^\top L - \ones^\top L L^+ L .
\]
By properties of the pseudoinverse,
$L L^+ L = L$ and thus $d^\top L = \zeros$ which
implies $s^\top L = \frac 1 {\| d \|_1} d^\top L = \zeros$.
It remains to show that $s \in \Delta_n$,
for which we need that $d \ge \zeros$
(coordinate-wise).
By definition of $d$, the latter is equivalent
to $\ones \ge L L^+ \ones$.
It is well known that $L L^+$
is an orthogonal projection and
thus has eigenvalues either
$0$ or $1$ and
therefore,
all entries of $L L^+ \ones$
have value at most $1$.
\iffalse
To this end, note
that vectors in the row span of $L$ sum
up to $0$. In particular, no vector in $\R^n_{>0}$
is in the row span of $L$.
Since $L L^+$ is an orthogonal
projection matrix onto the row space of $L$,
we obtain that $L L^+ \ones \le \ones$
or equivalently
$d \ge \zeros$ (both coordinate-wise).
\fabian{say instead eigvals $|\cdot| \le 1$}
\fi
\end{proof}

Recall that we split the
computation of the gradient
into two terms
\[
    \nabla \ell_2(L^+) =
    \left( \frac{\partial A}{\partial L^+} \right)^\top \otimes \Delta +
    \left( \frac{\partial B}{\partial L^+} \right)^\top \otimes \Delta
\]
We now prove Lemmas~\ref{lem:gradA}
and \ref{lem:gradB} which give
expressions for both terms.

{
\textbf{Lemma \ref{lem:gradA}.} \it
We have
\[
    \left( \frac{\partial A}{\partial L^+} \right)^\top \otimes \Delta
    = \ones\left(\ones^{\top}\Delta\right)-\left(\ones^{\top}\Delta\right)^{\top}\ones^{\top}-\ones\left(\Delta\ones\right)^{\top}+\left(\Delta\ones\right)\ones^{\top} .
\]
Furthermore,
$\left( \frac{\partial A}{\partial L^+} \right)^\top \otimes \Delta$
can be computed in time $O(n^2)$.
}

\begin{proof}
Note that
\[
\frac{\partial A}{\partial L_{uv}^{+}} =\frac{\partial}{\partial L_{uv}^{+}}\left(a^{\top}\ones^{\top}-\ones a\right)
  =\frac{\partial}{\partial L_{uv}^{+}}\left(L^{\top+}\ones\ones^{\top}-\ones\ones^{\top}L^{+}\right)
  =\frac{\partial}{\partial L_{uv}^{+}}\left(\left(\ones\ones^{\top}L^{+}\right)^{\top}-\ones\ones^{\top}L^{+}\right).
\]
Recall that
$\frac{\partial L^+}{\partial L^+_{uv}} = \ones \chi_{vu}^\top$
which implies
\[
\frac{\partial}{\partial L_{uv}^{+}}\ones\ones^{\top}L^{+} 
 =\ones\chi_{vu}^{\top}
\]
and thus
\[
\frac{\partial A}{\partial L_{uv}^{+}}=\chi_{vu}\ones^{\top}-\ones\chi_{vu}^{\top} .
\]
Finally,
\begin{multline*}
\ones^{\top}\left(\left(\frac{\partial A}{\partial L_{uv}^{+}}\right)\circ\Delta\right)\ones
 =\ones^{\top}\left(\ones\left(\uvec_v^{\top}-\uvec_u^{\top}\right)\circ\Delta-\chi_{vu}\ones^{\top}\circ\Delta\right)\ones\\
  =\ones^{\top}\left(\ones\left(\uvec_v^{\top}-\uvec_u^{\top}\right)\circ\Delta\right)\ones-\ones^{\top}\left(\chi_{vu}\ones^{\top}\circ\Delta\right)\ones
  =\left(\ones^{\top}\Delta\right)_v-\left(\ones^{\top}\Delta\right)_u-\left(\Delta\ones\right)_v+\left(\Delta\ones\right)_u
\end{multline*}
and thus
\begin{multline*}
\left(\frac{\partial A}{\partial L^{+}}\right)^{\top} \otimes \Delta
 =\left(\left(\ones^{\top}\Delta\right)_v-\left(\ones^{\top}\Delta\right)_u-\left(\Delta\ones\right)_v+\left(\Delta\ones\right)_u\right)_{uv} \\
  =\ones\left(\ones^{\top}\Delta\right)-\left(\ones^{\top}\Delta\right)^{\top}\ones^{\top}-\ones\left(\Delta\ones\right)^{\top}+\left(\Delta\ones\right)\ones^{\top} .
\end{multline*}
Note that this term only involves
summing over rows and columns, which
can be done in $O(n^2)$.
\end{proof}

{
\textbf{Lemma \ref{lem:gradB}.} \it
We have
\[
    \left( \frac{\partial B}{\partial L^+} \right)^\top \otimes \Delta
    = \diag(s)^{-1} \ones \ones^\top (\Delta^\top - \Delta)
     + \Delta \diag(s)^{-1} \ones \ones^\top + G
\]
for
\[
    G = \left(\ones^{\top}\left(\left(L^{+}-\diag\left(L^{+}\right)\ones^{\top}\right)^{\top}D\circ\Delta\right)g_{uv}\right)_{uv} .
\]
Furthermore, 
$\left( \frac{\partial B}{\partial L^+} \right)^\top \otimes \Delta$ can be computed in time
$O(n^3)$.
}

\begin{proof}
We pick a pair of states $u \not= v$.
By definition of $B$ and the product rule,
\begin{align}
\frac{\partial B}{\partial L_{uv}^{+}} & =\frac{\partial}{\partial L_{uv}^{+}}\left(L^{+}-\diag\left(L^{+}\right)\ones^{\top}\right)^{\top}\diag\left(s\right)^{-1} \nonumber \\
 & =\left(\frac{\partial}{\partial L_{uv}^{+}}\left(L^{+}-\diag\left(L^{+}\right)\ones^{\top}\right)^{\top}\right)\diag\left(s\right)^{-1} \label{eq:fst} \\
 &\quad+\left(L^{+}-\diag\left(L^{+}\right)\ones^{\top}\right)^{\top}\left(\frac{\partial}{\partial L_{uv}^{+}}\diag\left(s\right)^{-1}\right) \label{eq:snd} .
\end{align}
We consider the two terms \eqref{eq:fst} and \eqref{eq:snd} separately.
To obtain \eqref{eq:fst}, we calculate
\[
    \frac{\partial}{\partial L_{uv}^{+}}\left(L^{+}-\diag\left(L^{+}\right)\ones^{\top}\right)
    =\ones\chi_{vu}^{\top}+\uvec_u\ones^{\top}
\]
and thus have that
\[
    \eqref{eq:fst} = \left( \ones\chi_{vu}^{\top}+\uvec_u\ones^{\top} \right) \diag(s)^{-1} .
\]
For \eqref{eq:snd}, we use Lemma~\ref{lem:stat-jac} and obtain
\begin{align*}
\frac{\partial}{\partial L_{uv}^{+}}\diag\left(s\right)^{-1} 
 & =\diag\left(\left(\frac{1}{d_{w}}\left(\frac{\|d\|_{1}}{d_{w}}\uvec_{w}^{\top}-\ones^{\top}\right)g_{uv}\right)_{w}\right)\\
 & =\diag\left(g_{uv}\right) D .
\end{align*}
for $D = \diag\left(\left(\frac{1}{d_{w}}\left(\frac{1}{s_{w}}\uvec_{w}^{\top}-\ones^{\top}\right)\right)_{w}\right)$.
Thus,
\[
    \eqref{eq:snd} = 
 \left(L^{+}-\diag\left(L^{+}\right)\ones^{\top}\right)^{\top} \diag(g_{uv}) D.
\]
and overall, for $s^{-1} = (s^{-1}_w)_w$,
\begin{align*}
    \ones^\top \left( \frac{\partial B}{\partial L^+_{uv}} \circ \Delta \right) \ones
    &= \ones^\top \left( \left(\ones\chi_{vu}^{\top}+\uvec_u\ones^{\top}\right)\diag\left(s\right)^{-1}\circ\Delta \right) \ones \\
    &\quad + \ones^\top \left( \left(L^{+}-\diag\left(L^{+}\right)\ones^{\top}\right)^{\top} \diag(g_{uv}) D \circ \Delta \right) \ones \\
    &= \ones^{\top}\left(\ones\chi_{vu}^{\top}\circ\Delta+\uvec_u\ones^{\top}\circ\Delta\right)s^{-1} \\
    &\quad + \ones^{\top}\left(\left(L^{+}-\diag\left(L^{+}\right)\ones^{\top}\right)^{\top}D\circ\Delta\right)g_{uv} \\
    &= \frac{1}{s_v}\left(\ones^{\top}\Delta\right)_v-\frac{1}{s_u}\left(\ones^{\top}\Delta\right)_u+\uvec_u^{\top}\Delta s^{-1} \\
    &\quad + \ones^{\top}\left(\left(L^{+}-\diag\left(L^{+}\right)\ones^{\top}\right)^{\top}D\circ\Delta\right)g_{uv}
\end{align*}
where the second equality holds because $\diag(g_{uv})$ and $\diag(s)$ are diagonal.
Therefore,
\begin{align*}
    \left( \frac{\partial B}{\partial L^+_{uv}} \right)^\top \otimes \Delta &=
    \left( \diag(s)^{-1} \ones \ones^\top \Delta \right)^\top - \diag(s)^{-1} \ones \ones^\top \Delta
     + \Delta \diag(s)^{-1} \ones \ones^\top + G \\
    &= \diag(s)^{-1} \ones \ones^\top (\Delta^\top - \Delta)
     + \Delta \diag(s)^{-1} \ones \ones^\top + G .
\end{align*}
\iffalse
where
\[
    G = \left(\ones^{\top}\left(\left(L^{+}-\diag\left(L^{+}\right)\ones^{\top}\right)^{\top}D\circ\Delta\right)g_{uv}\right)_{uv} .
\]
\fi
Note that we can clearly compute
$\diag(s)^{-1}\ones \ones^\top (\Delta^\top - \Delta)$
and $\Delta \diag(s)^{-1} \ones^\top \ones$
in time $O(n^3)$.
It remains to show that $G$
can be computed in time $O(n^3)$:
Note that we can pre-compute
the vector
\[\ones^{\top}\left(\left(L^{+}-\diag\left(L^{+}\right)\ones^{\top}\right)^{\top}D\circ\Delta\right)\]
in time $O(n^3)$.
Furthermore, since
\[
    g_{uv} = \uvec_u\chi_{vu}^{\top}(LL^{+} + I )\ones,
\]
we can also pre-compute the vector $(LL^{+} + I )\ones$
and then determine $g_{uv}$ in time $O(n)$ for each $u, v \in [n]$.
Finally, we can compute each inner product to obtain
the entry $G_{uv}$ in time $O(n)$, so overall we need
$O(n^3)$ to compute $G$.
\end{proof}

\subsubsection{Jacobian of the Stationary Distribution}
\label{apx:stationary}

\begin{lemma}
    \label{lem:stat-jac}
    For any pair $u \not= v$ and state $w$,
    \[
        \frac{\partial s_w^{-1}}{\partial L_{uv}^+} =
        \frac{1}{d_{w}}\left(\frac 1 {s_w}\uvec_{w}^{\top}-\ones^{\top}\right) L g_{uv}
%       \frac{1}{d_{w}}\left(\frac{\|d\|_{1}}{d_{w}}\uvec_{w}^{\top}-\ones^{\top}\right)L\uvec_u\chi_{vu}^{\top}\left(LL^{+}+I\right)\ones
    \]
    where
    $g_{uv} = \uvec_u\chi_{vu}^{\top}(LL^{+} + I )\ones$.
\end{lemma}

\begin{proof}
Recall that
$s=\frac{d}{\| d \|_1}$ for $d=\ones - L L^+ \ones$.
Let us thus first calculate
$\frac{\partial d}{\partial L^+_{uv}}$.
We have
\begin{multline*}
\frac{\partial d}{\partial L_{uv}^{+}}  =\frac{\partial}{\partial L_{uv}^{+}}\left(\ones-LL^{+}\ones\right)
  =-\frac{\partial}{\partial L_{uv}^{+}}LL^{+}\ones\\
  =-\left(\frac{\partial L}{\partial L_{uv}^{+}}\right)L^{+}\ones-L\left(\frac{\partial L^{+}}{\partial L_{uv}^{+}}\right)\ones
  =-L\uvec_u\chi_{vu}^{\top}LL^{+}\ones-L\uvec_u\chi_{vu}^{\top}\ones
  =-L \underbrace{\uvec_u\chi_{vu}^{\top}(LL^{+} + I )\ones}_{=g_{uv}}
\end{multline*}
where we use Corollary~\ref{cor:lapl-jac} to evaluate
$\left( \frac{\partial L}{\partial L^+_{uv}} \right) L^+ \ones$.
Since $d\ge\zeros$, we furthermore
obtain for the length that
\[
 \frac{\partial}{\partial L_{uv}^{+}}\|d\|_{1}
 =\frac{\partial}{\partial L_{uv}^{+}}\ones^{\top}d
 =\ones^{\top}\frac{\partial d}{\partial L_{uv}^{+}}
 =-\ones^{\top}L g_{uv} .
\]
Given this,
we can now proceed to
calculate the Jacobian
of the stationary distribution.
We use the product rule to decompose
\begin{align*}
 \frac{\partial s_{w}^{-1}}{\partial L_{uv}^{+}}
 =\frac{\partial}{\partial L_{uv}^{+}}\frac{\|d\|_{1}}{d_{w}} 
 =\frac{1}{d_{w}}\left(\frac{\partial}{\partial L_{uv}^{+}}\|d\|_{1}\right)-\frac{\|d\|_{1}}{d_{w}^{2}}\left(\frac{\partial d_{w}}{\partial L_{uv}^{+}}\right) .
\end{align*}
We plug in our previous calculations for $d$
and obtain
\begin{align*}
\frac{\partial s_{w}^{-1}}{\partial L_{uv}^{+}}
 =\left( -\frac{1}{d_{w}} \ones^{\top} + \frac{\|d\|_{1}}{d_{w}^{2}} \uvec_w^\top \right)
 L g_{uv} 
 =\frac{1}{d_{w}}\left(\frac 1 {s_w}\uvec_{w}^{\top}-\ones^{\top}\right) L g_{uv} .
\end{align*}
\end{proof}

\subsubsection{Jacobian of the Laplacian}
\label{apx:lap}

\iffalse
In order to compute the Jacobian of the
Laplacian, we need a theorem
due to \citet{meyer73} to evaluate
a rank-1 update to the pseudoinverse,
which serves as a generalization
of the Sherman-Morrison formula.
For the reader's convenience we
restate Meyer's theorem.

\begin{theorem}
TODO
\end{theorem}

We will use this to obtain:
\fi

\begin{lemma}
\label{lem:lapl-jac}
The Laplacian has
the following Jacobian
with respect to $L^+_{uv}$, for
any $u \not= v$:
\[
    \frac{\partial L}{\partial L^+_{uv}}
    = LL^{\top}\chi_{vu}\uvec_u^{\top}\left(I-L^{+}L\right)
    - L\uvec_u\chi_{vu}^{\top}L .
\]
\end{lemma}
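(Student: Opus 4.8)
The plan is to compute the directional derivative of the Moore--Penrose pseudoinverse under a rank-one perturbation, using the rank-one update formula of \citet{meyer73}. Setting $X = L^+$ and using $(L^+)^+ = L$, I treat $L = X^+$, so that the quantity to be computed is the directional derivative of the map $X \mapsto X^+$ along $\uvec_u\chi_{vu}^\top$:
\[
    \frac{\partial L}{\partial L^+_{uv}}
    = \lim_{\epsilon \to 0} \frac 1 \epsilon
        \left( (X + \epsilon\,\uvec_u\chi_{vu}^\top)^+ - X^+ \right).
\]

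First I would check that the rank-one modification $X + \epsilon\,\uvec_u\chi_{vu}^\top$ falls into the case of Meyer's theorem that yields a clean closed form. The relevant structural facts are that $L^+$ has rank $n-1$ and that $\ones^\top L^+ = \zeros$, so the subspace orthogonal to the row space of $L^+$ is one-dimensional; combined with $\chi_{vu}^\top\ones = 0$, this identifies the correct case and places $\chi_{vu}$ in the appropriate subspace. The only other hypothesis, $\beta = 1 + \epsilon\,\chi_{vu}^\top X^+\uvec_u \neq 0$, holds for all sufficiently small $\epsilon$ since $\beta \to 1$.

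With the hypotheses in place, I would substitute Meyer's expression
\[
    (X + \epsilon\,\uvec_u\chi_{vu}^\top)^+
    = X^+ + \epsilon\,\tfrac{1}{\beta} X^+ h^\top u^\top - \epsilon\,\tfrac{\beta}{\sigma}\,p q^\top,
\]
where $h = \chi_{vu}^\top X^+$, $u = (I - XX^+)\uvec_u$, $k = X^+\uvec_u$, and $p,q,\sigma$ are the auxiliary quantities of the theorem, into the difference quotient and pass to the limit term by term. As $\epsilon \to 0$ one has $\beta \to 1$, $\beta/\sigma \to 1$, $p \to -k$ and $q^\top \to -h$, so the difference quotient converges to $X^+ h^\top u^\top - kh$. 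Re-expanding the definitions gives $X^+ X^{\top+}\chi_{vu}\uvec_u^\top (I - XX^+)^\top - X^+\uvec_u\chi_{vu}^\top X^+$; substituting $X^+ = L$, $X^{\top+} = L^\top$, and using that $XX^+ = L^+L$ is symmetric (so $(I - XX^+)^\top = I - L^+L$) collapses this to the claimed two-term formula.

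The main obstacle is the careful handling of Meyer's formula: pinning down precisely which case applies (through the verification that $\chi_{vu}$ sits in the row space of $L^+$ and that $\beta \neq 0$), and controlling the $\epsilon \to 0$ limit of the second summand $\frac{\beta}{\sigma}p q^\top$, whose numerator and denominator both depend on $\epsilon$ through $\sigma = \epsilon^2\|h\|^2\|u\|^2 + |\beta|^2$ and through $p$ and $q$. Once the limits of these auxiliary vectors and scalars are established, the collapse to the stated expression is routine linear algebra.
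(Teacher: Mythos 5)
Your proposal is correct and follows essentially the same route as the paper's own proof: both frame $\frac{\partial L}{\partial L^+_{uv}}$ as the directional derivative of $X \mapsto X^+$ at $X = L^+$ along $\uvec_u\chi_{vu}^\top$, verify the same two preconditions of Meyer's rank-one update theorem (that $\chi_{vu}$ lies in the row space of $L^+$ via $\ones^\top L^+ = \zeros$ and $\chi_{vu}^\top\ones = 0$, and that $\beta \neq 0$ for small $\epsilon$), pass to the limit with $\beta/\sigma \to 1$, $p \to -k$, $q^\top \to -h$, and collapse to the stated formula using the symmetry of $XX^+ = L^+L$. No gaps; the argument matches the paper's step for step.
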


\begin{proof}
We want to evaluate the matrix
$\frac{\partial L}{\partial L^+_{uv}}$
in closed form.
Let $X = L^+$ and note that
$L = X^+$.
By the definition of the gradient,
\begin{align}
    \label{eq:3}
    \frac{\partial X^+}{\partial X_{uv}} =
    \lim_{\epsilon \to 0} \frac 1 \epsilon
        ((X + \epsilon \uvec_u \chi_{vu}^\top)^+ - X^+) .
\end{align}
We can use a result from \citet{meyer73} to evaluate
the rank-one update
$X + \epsilon \uvec_u \chi_{vu}^\top$ to
the pseudoinverse.
Note that we meet the preconditions of Theorem~5
from \citet{meyer73}:
First, $\chi_{vu}$ is in the
row space of $X = L^+$. This is true
since $L^{+}$ has rank $n-1$, the kernel
of the row space of $L^+$ is spanned
by $\ones^\top$ since $\ones^{\top}L^{+}=\zeros$,
and $\chi_{vu}$ is orthogonal
to $\ones$.
Second, we require that
$\beta=1+\epsilon\chi_{vu} X^+ e_u \not= 0$.
Clearly, $\beta > 0$ for sufficiently small
$\epsilon$.
We can thus apply Theorem~5 from \citet{meyer73} and get 
\[
    (X + \epsilon \uvec_u \chi_{vu}^\top)^+
    = X^{+}+\epsilon\frac{1}{\beta}X^{+}h^{\top}u^{\top}-\epsilon\frac{\beta}{\sigma}pq^{\top}
\]
where
% \fabian{these defs are a bit different than in Meyer73}
\begin{align*}
\beta & = 1+\epsilon \chi_{vu}^{\top}X^{+} \uvec_u \in \R\\
h & = \chi_{vu}^{\top}X^{+} \in \R^{1 \times n}\\
u & = \left(I-XX^{+}\right) \uvec_u \in \R^n \\
k & = X^{+} \uvec_u \in \R^n \\
p & =-\epsilon\frac{\|u\|^{2}}{\beta}X^{+}h^{\top}- k \in \R^n \\
q^{\top} & =-\epsilon\frac{\|h\|^{2}}{\beta}u^{\top}-h \in \R^{1 \times n}\\
\sigma & =\epsilon^2\|h\|^{2}\cdot\|u\|^{2}+\left|\beta\right|^{2} \in \R
\end{align*}
We plug this back into
\eqref{eq:3} and obtain
\begin{align}
    \label{eq:4}
    \frac{\partial X^+}{\partial X_{uv}}
    = \lim_{\epsilon\to0}\frac{1}{\beta}X^{+}h^{\top}u^{\top}-\lim_{\epsilon\to0}\frac{\beta}{\sigma}pq^{\top}
\end{align}
We treat both limits separately. For the first limit,
we have
\begin{align*}
\lim_{\epsilon\to0}\frac{1}{\beta}X^{+}h^{\top}u^{\top} 
= \left(\lim_{\epsilon\to0}\frac{1}{1+\epsilon \chi_{vu}^{\top}X^{+} \uvec_u}\right) X^{+}h^{\top}u^{\top} 
= X^{+}h^{\top}u^{\top} 
= X^{+} X^{\top +} \chi_{vu} \uvec_u^\top (I - X X^+)^\top .
\end{align*}
as only $\beta$ depends on $\epsilon$.
For the second term in \eqref{eq:4},
we compute
\begin{align*}
    \lim_{\epsilon\to0}\frac{\beta}{\sigma}pq^{\top}
    = k h
    = X^+ \uvec_u \chi_{vu}^\top X^+
\end{align*}
since $\frac{\beta}{\sigma} \to 1$, $p \to -k$ and $q^\top \to -h$ for $\epsilon \to 0$.
We plug this back into \eqref{eq:4} to
obtain that overall,
\[
    \frac{\partial X^+}{\partial X_{uv}}
    = X^{+} X^{\top +} \chi_{vu} \uvec_u^\top (I - X X^+)^\top  -
        X^+ \uvec_u \chi_{vu}^\top X^+ .
\]
The final statement follows as $X X^+$ is symmetric.
\end{proof}

\begin{corollary}
\label{cor:lapl-jac}
For any $u \not= v$,
\[
    \left(\frac{\partial L}{\partial L_{uv}^{+}}\right) L^+ \ones
    = LL^{\top}\chi_{uv}\uvec_u^{\top}\ones .
\]
\end{corollary}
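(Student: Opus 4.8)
The plan is to substitute the closed-form Jacobian from Lemma~\ref{lem:lapl-jac} and to observe that right-multiplication by $L^{+}\ones$ annihilates its first term, leaving a single summand that collapses to the claimed right-hand side.

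First I would right-multiply the two-term formula of Lemma~\ref{lem:lapl-jac} by $L^{+}\ones$, which gives
\[
    \left(\frac{\partial L}{\partial L_{uv}^{+}}\right) L^{+}\ones
    = LL^{\top}\chi_{vu}\uvec_u^{\top}\left(I-L^{+}L\right)L^{+}\ones
    - L\uvec_u\chi_{vu}^{\top}LL^{+}\ones .
\]
The crux is to show the first summand vanishes. This is exactly where the Moore--Penrose structure enters: since $L^{+}LL^{+}=L^{+}$, we have $\left(I-L^{+}L\right)L^{+}=L^{+}-L^{+}LL^{+}=\zeros$, hence $\left(I-L^{+}L\right)L^{+}\ones=\zeros$, so the scalar $\uvec_u^{\top}\left(I-L^{+}L\right)L^{+}\ones$ multiplying the vector $LL^{\top}\chi_{vu}$ vanishes and the entire first summand is the zero vector. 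Intuitively, $I-L^{+}L$ is the orthogonal projector onto the null space of $L$, which is orthogonal to the range of $L^{+}$ in which $L^{+}\ones$ lies; the pseudoinverse identity just makes this cancellation precise.

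What remains is the second summand alone, and I would reduce it to the stated expression using the elementary rewrites $\uvec_u^{\top}\ones=1$ together with $\chi_{uv}=-\chi_{vu}$ to align the sign and the index order of the claim. I expect no genuine difficulty in this last step once the projector term is gone; the one point that truly needs care --- and therefore the main (if mild) obstacle --- is verifying that the cancellation $\left(I-L^{+}L\right)L^{+}\ones=\zeros$ is licensed, i.e. the correct invocation of $L^{+}LL^{+}=L^{+}$. Everything else is routine matrix bookkeeping.
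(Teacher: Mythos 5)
Your opening steps coincide exactly with the paper's own proof of Corollary~\ref{cor:lapl-jac}: right-multiply the Jacobian of Lemma~\ref{lem:lapl-jac} by $L^{+}\ones$, and annihilate the projector term via $(I-L^{+}L)L^{+}\ones = L^{+}\ones - L^{+}LL^{+}\ones = \zeros$, licensed by the Moore--Penrose identity $L^{+}LL^{+}=L^{+}$. Up to that point your argument and the paper's are the same, and your justification of the cancellation is the correct one.

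The gap is your final step. After the cancellation the surviving summand is
\[
  -L\uvec_u\chi_{vu}^{\top}LL^{+}\ones
  \;=\; L\uvec_u\left(\chi_{uv}^{\top}LL^{+}\ones\right),
\]
a \emph{scalar} multiple of the $u$-th column of $L$, whereas the stated right-hand side, after using $\uvec_u^{\top}\ones=1$, is $LL^{\top}\chi_{uv}$, i.e.\ the matrix $LL^{\top}$ applied to $\chi_{uv}$. These are structurally different vectors, and no combination of the rewrites $\uvec_u^{\top}\ones=1$ and $\chi_{uv}=-\chi_{vu}$ converts one into the other. Concretely, for any irreducible doubly stochastic chain we have $\ones^{\top}L=\zeros^{\top}$, hence $LL^{+}\ones=\zeros$ and the left-hand side is the zero vector, while $LL^{\top}\chi_{uv}\neq\zeros$ because the null space of $LL^{\top}$ equals that of $L^{\top}$, namely $\spn(\ones)$, which does not contain $\chi_{uv}$. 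So the ``routine bookkeeping'' you defer to would fail: what your (and the paper's) derivation actually establishes is $\left(\frac{\partial L}{\partial L_{uv}^{+}}\right)L^{+}\ones = L\uvec_u\chi_{uv}^{\top}LL^{+}\ones$. You should know that this defect is inherited from the paper itself: its proof of the corollary also terminates at $L\uvec_u\chi_{uv}^{\top}LL^{+}\ones$ without ever matching the printed right-hand side, which appears to be a typo assembled from the factors of the term that vanishes; and what is actually invoked downstream, in Lemma~\ref{lem:stat-jac}, is the expression $L\uvec_u\chi_{vu}^{\top}LL^{+}\ones$ (note the further sign discrepancy there). Your derivation is sound as a proof of the corrected statement, but not of the statement as printed.
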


\begin{proof}
By properties of the pseudoinverse,
$(I - L^+ L) L^+ \ones = L^+ \ones - L^+ L L^+ \ones = L^+ \ones - L^+ \ones = \zeros$.
We use this to simplify the Jacobian obtained in Lemma~\ref{lem:lapl-jac}:
\begin{align*}
    \left(\frac{\partial L}{\partial L_{uv}^{+}}\right) L^+ \ones
    = LL^{\top}\chi_{vu}\uvec_u^{\top}\left(I-L^{+}L\right) L^{+}\ones
    -L\uvec_u\chi_{vu}^{\top}LL^{+}\ones
    = L\uvec_u\chi_{uv}^{\top}LL^{+}\ones .
\end{align*}
\end{proof}

\section{Additional Experimental Results}
\label{sec:apx-exp}

We present the experimental results that we omitted from the main body due to space constraints.

Figure~\ref{fig:ht-inf} shows the
convergence behavior of our gradient descent approach
for learning a single Markov chain. We show
the recovery error for a fixed number of iterations
and the recovery error per iteration, compared
to the recovery error achieved by
\citet{wittmann2009reconstruction}. As we observe in Figure~\ref{fig:ht-inf} (left), the number of iterations should depend on the number of states. We observe a sudden transition in the loss function that translates to a sudden transition to the recovery error as well from 450 to 500 states.  For $n=25$ states, we observe  in Figure~\ref{fig:ht-inf} (right) that a few thousand iterations suffice to obtain a good solution. After 1000 iterations our obtained solution always improves the solution found by Wittman et al.~\cite{wittmann2009reconstruction} and occassionally even 500 iterations suffice. 

Figures~\ref{fig:ht-inf-from-trails} and \ref{fig:ht-inf-noise-hetero} show the recovery error of our approach and the approach of \citet{wittmann2009reconstruction}, for which we reported the difference in Figure~\ref{fig:ht-inf-noise}. We also show results with heteroscedastic noise in Figure~\ref{fig:ht-inf-noise-hetero}.
In this scenario, the noise level $\sigma_{uv}$ is proportional to $\sigma(H(u,v))$, with $\sigma_{uv} = 2 \cdot H(u,v) / t_{\text{cover}}$, where $t_{\text{cover}}$ is the cover time of the underlying Markov chain.
% Despite our method's high accuracy in fitting the noisy hitting times, indicated by an error in the magnitude of $10^{-6}$, this precision does not extend to a significant reduction in recovery error, highlighting the challenge of fitting noise.
The observed trend continues to be approximately linear, mirroring the pattern observed with homoscedastic noise.

\begin{figure}
    \centering
    \includegraphics[width=0.5\columnwidth]{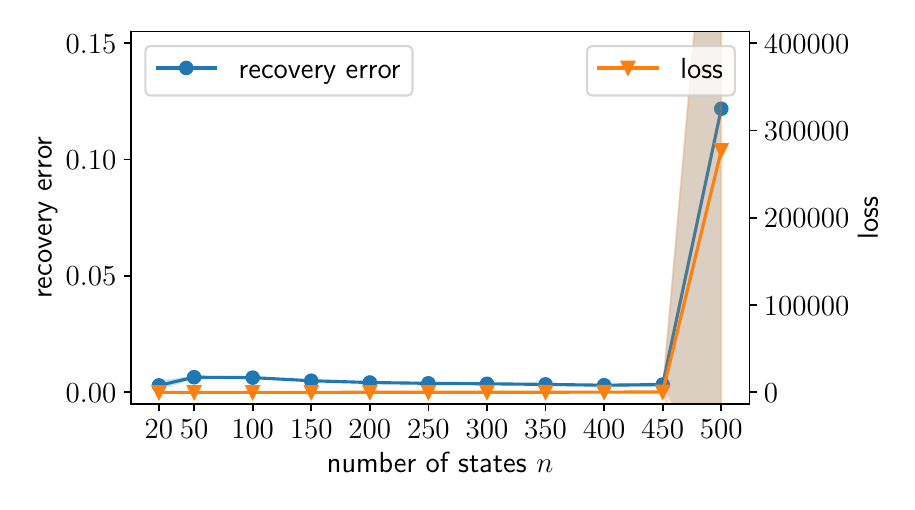}~
    \includegraphics[width=0.5\columnwidth]{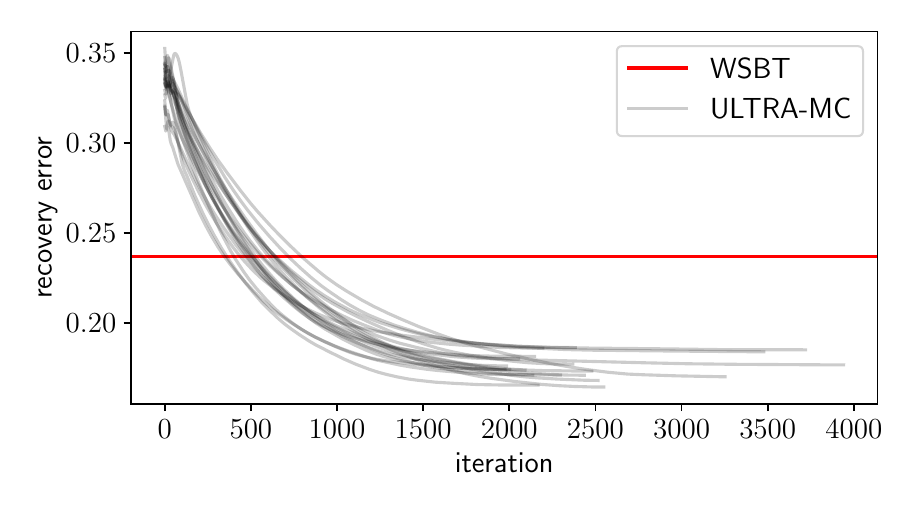}
    \vspace{-2em}
    \caption{ (Left) Learning a single transition matrix $M$ from hitting times using $10\,000$ gradient descent
    iterations. (Right) Convergence of \ULTRAMC from a random initialization compared with the method of \citet{wittmann2009reconstruction} (WSBT), for a uniform random Markov chain on $n=25$ states and homoscedastic noise with $\sigma=0.5$.}
    \label{fig:ht-inf}
\end{figure}

\begin{figure}
    \centering
    \includegraphics[width=0.5\columnwidth]{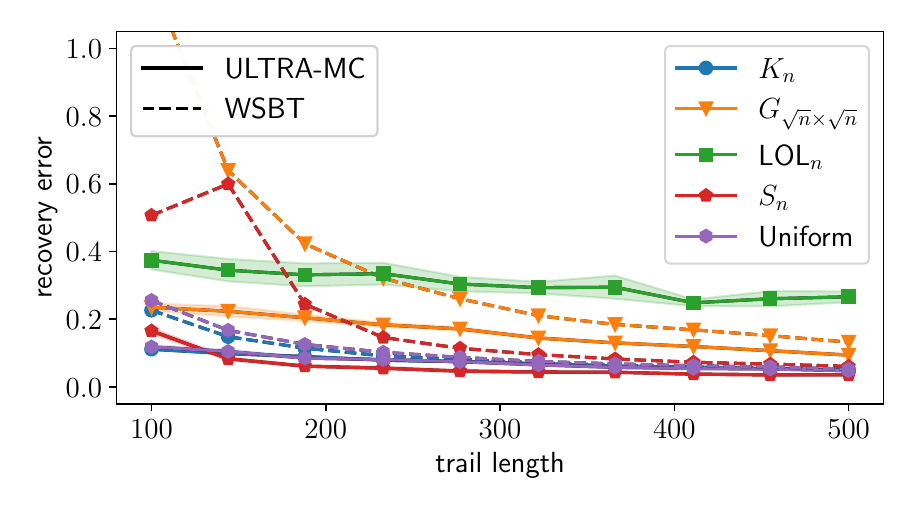}
    \vspace{-1em}
    \caption{Learning Markov chains via hitting times estimated from trails.}
    \label{fig:ht-inf-from-trails}
\end{figure}

\begin{figure}
    \centering
    \includegraphics[width=0.5\columnwidth]{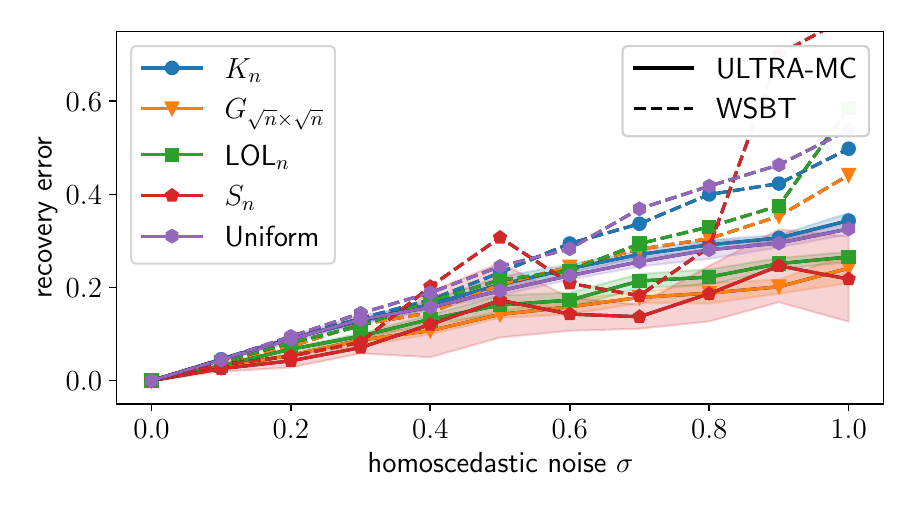}~
    \includegraphics[width=0.5\columnwidth]{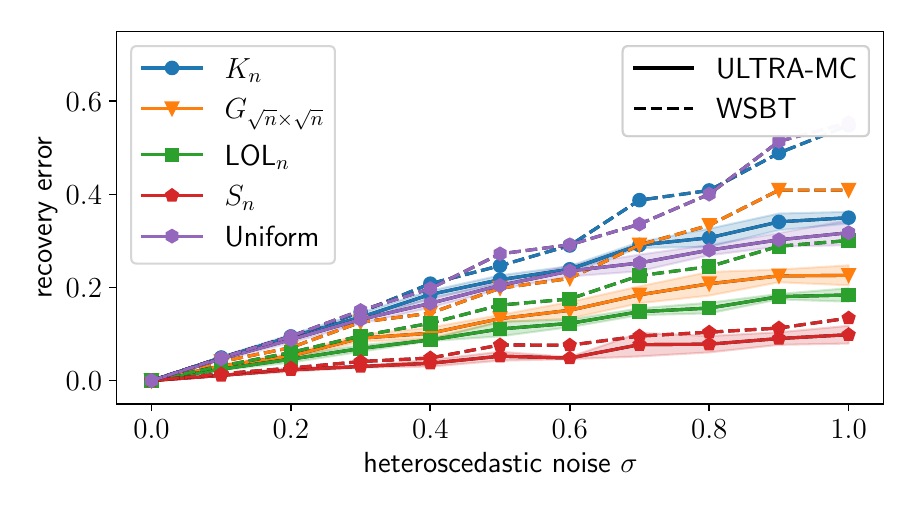}
    \vspace{-1em}
    \caption{Learning Markov chains under homoscedastic and heteroscedastic noise.}
    \label{fig:ht-inf-noise-hetero}
\end{figure}

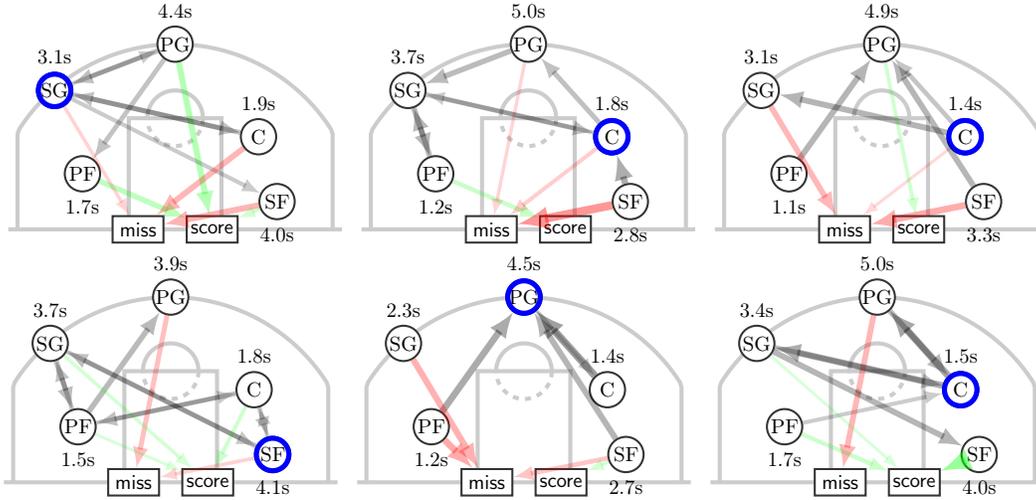
\begin{figure}
    \centering

\tikzstyle{court}=[black!20, line width=2pt,samples=100]
\tikzstyle{player}=[circle, inner sep=0pt, text width=16pt, align=center, draw=black!80, line width=1pt, fill=white]
\tikzstyle{pass}=[-latex, black]
\tikzstyle{basket}=[player, rectangle, inner sep=2pt, minimum height=13pt, text centered, text width=21pt]
\tikzstyle{start}=[player,fill=none,draw=blue,line width=2.5pt]

\scalebox{0.77}{
\begin{tikzpicture}[scale=1.6]

    \draw[court, domain=0:180] plot ({min(1.7, max(-1.7, 2*cos(\x)))}, {2*sin(\x)});
    \draw[court] (-0.5,0) -- (-0.5,1.2) -- (0.5,1.2) -- (0.5,0);
    \draw[court, domain=0:180] plot ({0.3*cos(\x)}, {1.2 + 0.3*sin(\x)});
    \draw[court, dashed, domain=0:180] plot ({0.3*cos(\x)}, {1.2 - 0.3*sin(\x)});
    \draw[court] (-1.8,0) -- (1.8,0);

    \node[player] (PG) at (0,2) {PG};
    \node[player] (SG) at (-1.3,1.5) {SG};
    \node[player] (PF) at (-1.0,0.6) {PF};
    \node[player] (C) at (0.9,1.0) {C};
    \node[player] (SF) at (1.1,0.3) {SF};

    \node[basket] (miss) at (-0.4,0) {$\mathsf{miss}$};
    \node[basket] (score) at (0.4,0) {$\mathsf{score}$};

    \node[label={[label distance=6pt]below:{1.7s}}] at (PF) {};
    \draw[pass,opacity=0.26510165592858015,line width=2.6510165592858015pt,green] (PF) to (score);
    \node[label={[label distance=6pt]above:{4.4s}}] at (PG) {};
    \draw[pass,opacity=0.29458084122517175,line width=2.9458084122517176pt,green] (PG) to (score);
    \draw[pass,opacity=0.2383308418504324,line width=2.383308418504324pt] (PG) to (PF);
    \draw[pass,opacity=0.24858367497414616,line width=2.4858367497414617pt] (PG) to (SG);
    \node[label={[label distance=6pt]above:{3.1s}}] at (SG) {};
    \node[start] at (SG) {};
    \draw[pass,opacity=0.17560673890701559,line width=1.756067389070156pt,red] (SG) to (miss);
    \draw[pass,opacity=0.2061996884758428,line width=2.061996884758428pt] (SG) to (PG);
    \draw[pass,opacity=0.24610954646880243,line width=2.4610954646880243pt] (SG) to (C);
    \draw[pass,opacity=0.2053183704136348,line width=2.053183704136348pt] (SG) to (SF);
    \node[label={[label distance=6pt]above:{1.9s}}] at (C) {};
    \draw[pass,opacity=0.28952125376435706,line width=2.8952125376435704pt,red] (C) to (miss);
    \draw[pass,opacity=0.24088936968928049,line width=2.4088936968928047pt] (C) to (SG);
    \node[label={[label distance=6pt]below:{4.0s}}] at (SF) {};
    \draw[pass,opacity=0.29716807517987137,line width=2.9716807517987136pt,red] (SF) to (miss);
    \draw[pass,opacity=0.1777627906759324,line width=1.777627906759324pt,green] (SF) to (score);

\end{tikzpicture}}
\scalebox{0.77}{
\begin{tikzpicture}[scale=1.6]

    \draw[court, domain=0:180] plot ({min(1.7, max(-1.7, 2*cos(\x)))}, {2*sin(\x)});
    \draw[court] (-0.5,0) -- (-0.5,1.2) -- (0.5,1.2) -- (0.5,0);
    \draw[court, domain=0:180] plot ({0.3*cos(\x)}, {1.2 + 0.3*sin(\x)});
    \draw[court, dashed, domain=0:180] plot ({0.3*cos(\x)}, {1.2 - 0.3*sin(\x)});
    \draw[court] (-1.8,0) -- (1.8,0);

    \node[player] (PG) at (0,2) {PG};
    \node[player] (SG) at (-1.3,1.5) {SG};
    \node[player] (PF) at (-1.0,0.6) {PF};
    \node[player] (C) at (0.9,1.0) {C};
    \node[player] (SF) at (1.1,0.3) {SF};

    \node[basket] (miss) at (-0.4,0) {$\mathsf{miss}$};
    \node[basket] (score) at (0.4,0) {$\mathsf{score}$};
    
    \node[label={[label distance=6pt]below:{1.2s}}] at (PF) {};
    \draw[pass,opacity=0.21872581026476057,line width=2.187258102647606pt,green] (PF) to (score);
    \draw[pass,opacity=0.27182768351502307,line width=2.7182768351502307pt] (PF) to (SG);
    \node[label={[label distance=6pt]above:{5.0s}}] at (PG) {};
    \draw[pass,opacity=0.1713026259850674,line width=1.713026259850674pt,red] (PG) to (miss);
    \draw[pass,opacity=0.26861837841123754,line width=2.6861837841123752pt] (PG) to (SG);
    \node[label={[label distance=6pt]above:{3.7s}}] at (SG) {};
    \draw[pass,opacity=0.27826255330459493,line width=2.782625533045949pt] (SG) to (PF);
    \draw[pass,opacity=0.24172062308035328,line width=2.417206230803533pt] (SG) to (C);
    \node[label={[label distance=6pt]above:{1.8s}}] at (C) {};
    \node[start] at (C) {};
    \draw[pass,opacity=0.1979502828961021,line width=1.9795028289610208pt,red] (C) to (miss);
    \draw[pass,opacity=0.24499038804499915,line width=2.4499038804499915pt] (C) to (PG);
    \draw[pass,opacity=0.20631256318752442,line width=2.063125631875244pt] (C) to (SG);
    \node[label={[label distance=6pt]below:{2.8s}}] at (SF) {};
    \draw[pass,opacity=0.4104322982248672,line width=4.104322982248672pt,red] (SF) to (miss);
    \draw[pass,opacity=0.32170698119757596,line width=3.2170698119757595pt] (SF) to (C);

\end{tikzpicture}}
\scalebox{0.77}{
\begin{tikzpicture}[scale=1.6]

    \draw[court, domain=0:180] plot ({min(1.7, max(-1.7, 2*cos(\x)))}, {2*sin(\x)});
    \draw[court] (-0.5,0) -- (-0.5,1.2) -- (0.5,1.2) -- (0.5,0);
    \draw[court, domain=0:180] plot ({0.3*cos(\x)}, {1.2 + 0.3*sin(\x)});
    \draw[court, dashed, domain=0:180] plot ({0.3*cos(\x)}, {1.2 - 0.3*sin(\x)});
    \draw[court] (-1.8,0) -- (1.8,0);

    \node[player] (PG) at (0,2) {PG};
    \node[player] (SG) at (-1.3,1.5) {SG};
    \node[player] (PF) at (-1.0,0.6) {PF};
    \node[player] (C) at (0.9,1.0) {C};
    \node[player] (SF) at (1.1,0.3) {SF};

    \node[basket] (miss) at (-0.4,0) {$\mathsf{miss}$};
    \node[basket] (score) at (0.4,0) {$\mathsf{score}$};

    \node[label={[label distance=6pt]below:{1.1s}}] at (PF) {};
    \draw[pass,opacity=0.2909338937660928,line width=2.909338937660928pt] (PF) to (PG);
    \node[label={[label distance=6pt]above:{4.9s}}] at (PG) {};
    \draw[pass,opacity=0.1851329694306571,line width=1.8513296943065711pt,green] (PG) to (score);
    \node[label={[label distance=6pt]above:{3.1s}}] at (SG) {};
    \draw[pass,opacity=0.28320410778198774,line width=2.8320410778198775pt,red] (SG) to (miss);
    \node[label={[label distance=6pt]above:{1.4s}}] at (C) {};
    \node[start] at (C) {};
    \draw[pass,opacity=0.1528596400039857,line width=1.5285964000398569pt,red] (C) to (miss);
    \draw[pass,opacity=0.22885141602747153,line width=2.2885141602747154pt] (C) to (PG);
    \draw[pass,opacity=0.2916227342095075,line width=2.916227342095075pt] (C) to (SG);
    \node[label={[label distance=6pt]below:{3.3s}}] at (SF) {};
    \draw[pass,opacity=0.3264624381685462,line width=3.264624381685462pt,red] (SF) to (miss);
    \draw[pass,opacity=0.2866733251311874,line width=2.8667332513118735pt] (SF) to (PG);

\end{tikzpicture}} \\
\scalebox{0.77}{
\begin{tikzpicture}[scale=1.6]

    \draw[court, domain=0:180] plot ({min(1.7, max(-1.7, 2*cos(\x)))}, {2*sin(\x)});
    \draw[court] (-0.5,0) -- (-0.5,1.2) -- (0.5,1.2) -- (0.5,0);
    \draw[court, domain=0:180] plot ({0.3*cos(\x)}, {1.2 + 0.3*sin(\x)});
    \draw[court, dashed, domain=0:180] plot ({0.3*cos(\x)}, {1.2 - 0.3*sin(\x)});
    \draw[court] (-1.8,0) -- (1.8,0);

    \node[player] (PG) at (0,2) {PG};
    \node[player] (SG) at (-1.3,1.5) {SG};
    \node[player] (PF) at (-1.0,0.6) {PF};
    \node[player] (C) at (0.9,1.0) {C};
    \node[player] (SF) at (1.1,0.3) {SF};

    \node[basket] (miss) at (-0.4,0) {$\mathsf{miss}$};
    \node[basket] (score) at (0.4,0) {$\mathsf{score}$};

    \node[label={[label distance=6pt]below:{1.5s}}] at (PF) {};
    \draw[pass,opacity=0.15380358355569995,line width=1.5380358355569994pt,green] (PF) to (score);
    \draw[pass,opacity=0.27440529374207384,line width=2.7440529374207383pt] (PF) to (PG);
    \draw[pass,opacity=0.23335037525701952,line width=2.333503752570195pt] (PF) to (SG);
    \draw[pass,opacity=0.20549686766535183,line width=2.0549686766535182pt] (PF) to (C);
    \node[label={[label distance=6pt]above:{3.9s}}] at (PG) {};
    \draw[pass,opacity=0.24825729006937464,line width=2.4825729006937465pt,red] (PG) to (miss);
    \node[label={[label distance=6pt]above:{3.7s}}] at (SG) {};
    \draw[pass,opacity=0.17551650846024403,line width=1.7551650846024403pt,green] (SG) to (score);
    \draw[pass,opacity=0.2621192301045261,line width=2.621192301045261pt] (SG) to (PF);
    \draw[pass,opacity=0.2232303607063736,line width=2.232303607063736pt] (SG) to (SF);
    \node[label={[label distance=6pt]above:{1.8s}}] at (C) {};
    \draw[pass,opacity=0.19514111524341507,line width=1.9514111524341506pt,green] (C) to (score);
    \draw[pass,opacity=0.24370855982881215,line width=2.4370855982881214pt] (C) to (PF);
    \draw[pass,opacity=0.2287889313775109,line width=2.2878893137751093pt] (C) to (SF);
    \node[label={[label distance=6pt]below:{4.1s}}] at (SF) {};
    \node[start] at (SF) {};
    \draw[pass,opacity=0.1799599980400186,line width=1.799599980400186pt,red] (SF) to (miss);
    \draw[pass,opacity=0.2486343574605205,line width=2.486343574605205pt] (SF) to (SG);
    \draw[pass,opacity=0.20355699599760024,line width=2.0355699599760024pt] (SF) to (C);

\end{tikzpicture}}
\scalebox{0.77}{
\begin{tikzpicture}[scale=1.6]

    \draw[court, domain=0:180] plot ({min(1.7, max(-1.7, 2*cos(\x)))}, {2*sin(\x)});
    \draw[court] (-0.5,0) -- (-0.5,1.2) -- (0.5,1.2) -- (0.5,0);
    \draw[court, domain=0:180] plot ({0.3*cos(\x)}, {1.2 + 0.3*sin(\x)});
    \draw[court, dashed, domain=0:180] plot ({0.3*cos(\x)}, {1.2 - 0.3*sin(\x)});
    \draw[court] (-1.8,0) -- (1.8,0);

    \node[player] (PG) at (0,2) {PG};
    \node[player] (SG) at (-1.3,1.5) {SG};
    \node[player] (PF) at (-1.0,0.6) {PF};
    \node[player] (C) at (0.9,1.0) {C};
    \node[player] (SF) at (1.1,0.3) {SF};

    \node[basket] (miss) at (-0.4,0) {$\mathsf{miss}$};
    \node[basket] (score) at (0.4,0) {$\mathsf{score}$};

    \node[label={[label distance=6pt]below:{1.2s}}] at (PF) {};
    \draw[pass,opacity=0.33118212732779806,line width=3.311821273277981pt,red] (PF) to (miss);
    \draw[pass,opacity=0.3295939747949673,line width=3.2959397479496735pt] (PF) to (PG);
    \node[label={[label distance=6pt]above:{4.5s}}] at (PG) {};
    \node[start] at (PG) {};
    \draw[pass,opacity=0.202714863880485,line width=2.0271486388048503pt] (PG) to (C);
    \node[label={[label distance=6pt]above:{2.3s}}] at (SG) {};
    \draw[pass,opacity=0.3087138047275384,line width=3.087138047275384pt,red] (SG) to (miss);
    \node[label={[label distance=6pt]above:{1.4s}}] at (C) {};
    \draw[pass,opacity=0.37296857692760854,line width=3.7296857692760854pt] (C) to (PG);
    \node[label={[label distance=6pt]below:{2.7s}}] at (SF) {};
    \draw[pass,opacity=0.2308268783523281,line width=2.308268783523281pt,red] (SF) to (miss);
    \draw[pass,opacity=0.17484054607101848,line width=1.7484054607101849pt,green] (SF) to (score);
    \draw[pass,opacity=0.29361882144228807,line width=2.936188214422881pt] (SF) to (PG);
    
\end{tikzpicture}}
\scalebox{0.77}{
\begin{tikzpicture}[scale=1.6]

    \draw[court, domain=0:180] plot ({min(1.7, max(-1.7, 2*cos(\x)))}, {2*sin(\x)});
    \draw[court] (-0.5,0) -- (-0.5,1.2) -- (0.5,1.2) -- (0.5,0);
    \draw[court, domain=0:180] plot ({0.3*cos(\x)}, {1.2 + 0.3*sin(\x)});
    \draw[court, dashed, domain=0:180] plot ({0.3*cos(\x)}, {1.2 - 0.3*sin(\x)});
    \draw[court] (-1.8,0) -- (1.8,0);

    \node[player] (PG) at (0,2) {PG};
    \node[player] (SG) at (-1.3,1.5) {SG};
    \node[player] (PF) at (-1.0,0.6) {PF};
    \node[player] (C) at (0.9,1.0) {C};
    \node[player] (SF) at (1.1,0.3) {SF};

    \node[basket] (miss) at (-0.4,0) {$\mathsf{miss}$};
    \node[basket] (score) at (0.4,0) {$\mathsf{score}$};

    \node[label={[label distance=6pt]below:{1.7s}}] at (PF) {};
    \draw[pass,opacity=0.20847605885349677,line width=2.0847605885349676pt,green] (PF) to (score);
    \draw[pass,opacity=0.2001092564635487,line width=2.001092564635487pt] (PF) to (C);
    \node[label={[label distance=6pt]above:{5.0s}}] at (PG) {};
    \draw[pass,opacity=0.2845671598026733,line width=2.845671598026733pt,red] (PG) to (miss);
    \draw[pass,opacity=0.2743386814650957,line width=2.743386814650957pt] (PG) to (C);
    \node[label={[label distance=6pt]above:{3.4s}}] at (SG) {};
    \draw[pass,opacity=0.15267372181655098,line width=1.52673721816551pt,green] (SG) to (score);
    \draw[pass,opacity=0.2463312684467804,line width=2.463312684467804pt] (SG) to (C);
    \draw[pass,opacity=0.28734556401841704,line width=2.8734556401841704pt] (SG) to (SF);
    \node[label={[label distance=6pt]above:{1.5s}}] at (C) {};
    \node[start] at (C) {};
    \draw[pass,opacity=0.3468759216357968,line width=3.468759216357968pt] (C) to (PG);
    \draw[pass,opacity=0.32393812144070355,line width=3.2393812144070355pt] (C) to (SG);
    \node[label={[label distance=6pt]below:{4.0s}}] at (SF) {};
    \draw[pass,opacity=0.4009730291786204,line width=4.009730291786203pt,green] (SF) to (score);
    
\end{tikzpicture}}
\caption{\label{fig:boston} Offensive strategies of the Boston Celtics during the 2022 season, learned from a mixture of $C=6$ continuous-time Markov chains.}
\end{figure}

%\labis{Figure~\ref{fig:ht-est} shows the recovery error for the four basic graphs we use after 50\,000 gradient descent iterations. On the left we see the recovery error when the hitting times matrix $H$ is given as input. The lollipop graph due to the large  requires 

% %\begin{figure}
%     \centering
%     \includegraphics[width=0.45\textwidth]{}
%     \includegraphics[width=0.45\textwidth]{}
%     \caption{Learning $M$ from 
%     true and estimated hitting times}
%     \label{fig:ht-est-inf}
% \end{figure}
% }

%\labis{DAG}

\begin{table}[h]
\caption{Empirical standard deviation of the results in Figure~\ref{fig:ht-mix}}
\label{tab:std-ht-mix}
\begin{center}
\begin{small}
\begin{tabular}{r|llllllll}
\toprule

$n$ &
3 &
4 &
5 &
6 &
7 &
8 &
9 &
10 \\
\midrule

ULTRA-MC &
0.04 & 0.0 & 0.11 & 0.0 & 0.0 & 0.0 & 0.15 & 0.08 \\
EM (continuous) &
0.08 & 0.06 & 0.15 & 0.08 & 0.0 & 0.0 & \\
KTT (discretized) &
0.03 & 0.04 & 0.03 & 0.03 & 0.02 & 0.01 & \\
EM (discretized) &
0.18 & 0.2 & 0.19 & 0.18 & 0.13 & 0.17 & 0.14 & 0.0 \\
SVD (discretized) &
0.16 & 0.15 & 0.13 & 0.13 & 0.16 & 0.08 & 0.13 & 0.07 \\

\bottomrule
\end{tabular}
\end{small}
\end{center}
\end{table}

\end{document}